  \providecommand\BibTeX{{%
    \normalfont B\kern-0.5em{\scshape i\kern-0.25em b}\kern-0.8em\TeX}}}
\useunder{\uline}{\ul}{}
\useunder{\uline}{\ul}{}
\newcommand{\mymathhl}[1]{\colorbox{gray!15}{$\displaystyle #1$}}
\newtcolorbox{mybox}[3][]
{
  colframe = #2!25,
  colback  = #2!10,
  coltitle = #2!20!black,  
  title    = {#3},
  #1,
}
\begin{document}

\title{Clenshaw Graph Neural Networks}


\author{Yuhe Guo}
\affiliation{%
  \institution{Renmin University of China}
  \city{Beijing}
  \country{China}
  }
\email{guoyuhe@ruc.edu.cn} 

\author{Zhewei Wei}
\affiliation{%
  \institution{Renmin University of China}
  \city{Beijing}
  \country{China}
  }
\email{zhewei@ruc.edu.cn} 

\renewcommand{\shortauthors}{Trovato and Tobin, et al.}

\begin{abstract}

Graph Convolutional Networks (GCNs), which use a message-passing paradigm with stacked convolution layers, 
are foundational methods for learning graph representations.
{Recent GCN models use various residual connection techniques to alleviate the model degradation problem such as over-smoothing and gradient vanishing.
Existing residual connection techniques, however, fail to 
make extensive use of underlying graph structure as in the graph spectral domain, }
which is critical for obtaining satisfactory results on heterophilic graphs.

In this paper, we introduce ClenshawGCN, a GNN model that employs the Clenshaw Summation Algorithm to enhance the expressiveness of the GCN model.
ClenshawGCN equips the standard GCN model with two straightforward residual modules: the \textit{adaptive initial residual connection} and the \textit{negative second-order residual connection}.
We show that by adding these two residual modules, ClenshawGCN implicitly simulates a polynomial filter under the Chebyshev basis, giving it at least as much expressive power as polynomial spectral GNNs.
In addition, we conduct comprehensive experiments to demonstrate the superiority of our model over spatial and spectral GNN models.



\end{abstract}



\keywords{Graph Neural Networks, Residual Connection, Graph Polynomial Filter}


\settopmatter{printfolios=true}
\maketitle
\section{introduction}

The past few years have witnessed the rise of machine learning on graphs, which considers
relations (edges) between elements (nodes) such as
interactions among molecules \cite{duvenaud2015convolutional, satorras2021n}, 
friendship or hostility between users \cite{fan2019graph, wu2020graph}, and  implicit syntactic or semantic structure in natural language \cite{schlichtkrull2020interpreting, wu2021graph}.

GCN~\cite{kipf2016semi} proposed a message-passing paradigm for Graph Neural Networks that exploits the underlying graph topology by propagating node features iteratively along the edges. 
Along with the message-passing steps, each node receives information from growingly expanding neighborhoods.
Such a \textit{propagation} entangled with non-linear \textit{transformation} 
forms a \textit{graph convolution layer} in GCN. 


\begin{figure*}[htp]
    \includegraphics[width=1.8\columnwidth]{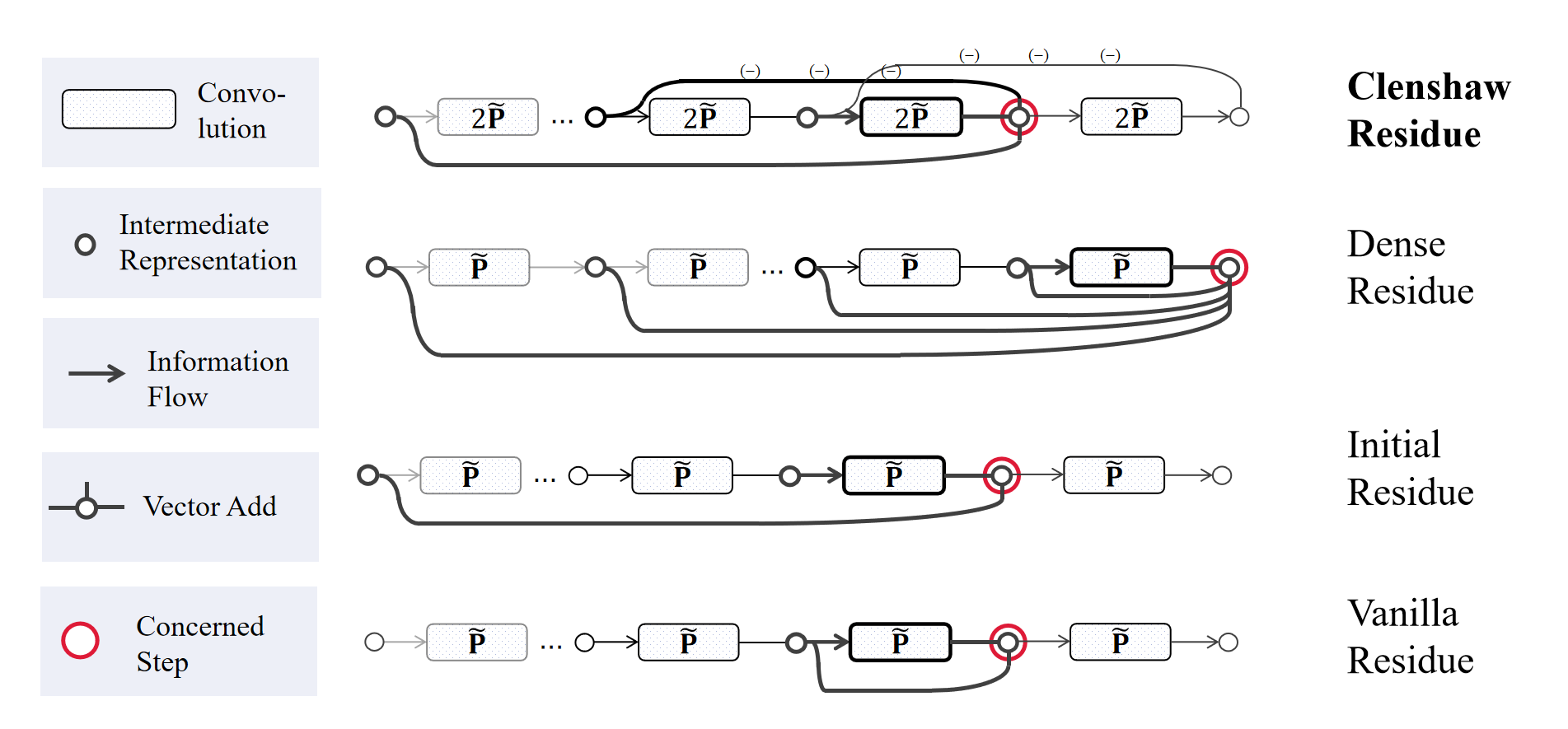}
    \caption{Illustration of different graph residual connections. \textit{Raw Residue} and \textit{Initial Residue} follows a single-line structure, \textit{Dense Residue} connect densely to all the former layers, while our \textit{Clenshaw Residue} achieves strong expressive power by a neat double-line structure. }
    \label{fig:residues}
    \Description{figure description}
\end{figure*}

Following ResNet \cite{He2016residual}, various GNN models use different residual connection techniques to overcome the problem of \textit{model degradation}. 
As shown in Figure~\ref{fig:residues}, we broadly classify the graph residual connections into three types:

\begin{itemize}[leftmargin = 4mm]
    \item \textbf{Raw Residual Connections} are 
    directly transplanted to GCNs in early attempts \cite{kipf2016semi}, 
    but they lose effectiveness when models come deeper\cite{keyulu2018jknet, chen2020gcnii}.   
    The reason behind this was gradually
    clarified with the discussion of {\ul \textit{over-smoothing}} \cite{li2018deeper,rong2020dropedge}
    .  
    \item  
    \textbf{Initial Residual Connections} \cite{klicpera2019appnp, chen2020gcnii} are intuited by \textit{personalized pagerank}, 
    following the perspective of viewing GCNs with raw residues as \textit{lazy random walk} \cite{keyulu2018jknet, wang2019improving}, 
    which converges to the \textit{stationary vector}. 
    Equipped with initial residues,   
    the final representation implicitly leverages features of all fused levels. 
    However, despite its ability to eliminate over-smoothing, 
    models with
    initial residual connections in fact employees the \textit{\underline{homophily assumption}} \cite{mcpherson2001birds},  
    that is, the assumption that connected nodes tend to share similar features and labels, 
    which might be unsuitable for heterophilic graphs~\cite{Pei2020GeomGCN, zheng2022hetesurvey}. 
    Some variants are also explored~\cite{Liu2021Air, Zhang2022hinder}, 
    but they still fall into the scope of homophily. 
    \item 
    There are also \textbf{Dense Residual Connections}, which believe that,    
    by exhibiting feature maps after different times of convolutions and combining them selectively, 
    it is helpful for making extensive use of multi-scale information
    \cite{abu2019mixhop, keyulu2018jknet}.   
    The connection pattern in DenseNet \cite{iandola2014densenet} is also exploited 
    in ~\cite{Li2019deep}, 
    where they connect each layer to all former layers, incurs unaffordable space consumption.
\end{itemize}

\subsubsection*{\ul Residual Connections and Polynomial Filters}
On the other hand, 
the idea of making extensive use of multi-scale representations is closely related to  
\textit{Spectral Graph Neural Networks}, 
which motivates us to {inject the characteristics of spectral GNNs into our model by residual connections}. 
Spectral GNNs {gain their power} 
from utilizing atomic \textit{structural components} decomposed from the underlying graph. 
Spectral GNNs project features onto these components, 
and modulate the importance of the components by a \textit{filter}. 
When this filter is a $K$-order \textit{polynomial} function on the graph's Laplacian spectrum, 
explicit decompositions of matrices are avoided, 
replaced by additions and subtractions of localized propagation results 
within different neighborhood radii, 
building a connection with `residual connection'.


Often, the problem of polynomial filtering is reduced to 
learning a proper polynomial function in order $K$. 
The expressive power in terms of representing \textbf{arbitrary polynomial filters} has been discussed in 
former graph residual connection works, especially dense residues \cite{abu2019mixhop, chien2021gprgnn}.  
However, 
motivated from the spatial view of message passing, 
these works lose an important part of spectral graph learning: the employee of \textbf{polynomial bases}. 
Polynomial bases are of considerable significance when learning filtering functions.  
Until now, different polynomial bases are utilized, including 
Chebyshev Basis \cite{Defferrard2016cheb, he2022chebii}, 
Bernstein Basis \cite{He2021bern}
etc.\
See the example of Runge Phenomenon in \cite{he2022chebii} for an illustration of its importance.

\subsubsection*{\ul Contributions}
This paper focuses on the design of graph residual connections that exploit the expressive power of spectral polynomial filters.
As we have demonstrated, 
raw residual connections and initial residual connections are primarily concerned with mitigating \textit{over-smoothing}, and they commonly rely on the \textit{homophily assumption}.
Some dense residual connections do increase the expressive capacity of a GNN, with analyses aligning the spatial process with a potentially arbitrary spectral polynomial filter.
Unfortunately, these works do not take the significance of different polynomial bases into account and are therefore incapable of learning a more effective spectral node representation.

In this paper, we introduce ClenshawGCN, a GNN model that
employs the Clenshaw Summation Algorithm to {enhance the expressiveness of the GCN model}. 
More specifically, we list our contributions as follows:
\begin{itemize}[leftmargin=4mm]
    \item \textbf{Powerful residual connection submodules}. We propose ClenshawGCN, a message-passing GNN which borrows spectral power by adding two simple residual connection modules to each convolution layer: 
    an \textit{adaptive initial residue} and a \textit{negative second order residue}; 
    \item \textbf{Expressive power in terms of polynomial filters} We show that a $K$-order ClenshawGCN simulates \textit{any} $K$-order polynomial function based on the \textit{Chebyshev basis of the Second Kind}. 
    More specifically, 
    the \textit{adaptive initial residue} allows for 
    the flexibility of coefficients,
    while the \textit{negative second order residue} allows for the implicit use of 
    Chebyshev basis. We prove this by the Clenshaw Summation Algorithm for Chebyshev basis (the Second Kind). 
    \item \textbf{Outstanding performances}. We compare ClenshawGCN with both spatial and spectral models.  
    Extensive empirical studies demonstrate that ClenshawGCN outperforms 
    both spatial and spectral models on various datasets.
\end{itemize}


\section{Preliminaries}
\subsection{Notations}
We consider a simple graph $\mathcal{G}=(\mathcal{V}, \mathcal{E}, \mathbf{A})$, 
where $\mathcal{V} = \{1, 2, \cdots, n \}$ is a finite node-set with $|\mathcal{V}|=n$, 
$\mathcal{E}$ is an edge-set with  $|\mathcal{E}|=m$,
and $\mathbf{A}$ is an unnormalized adjacency matrix.
$\mathcal{L} = \mathbf{D} - \mathbf{A}$ denotes $\mathcal{G}'s$ unnormalized graph Laplacian,
where $\mathbf{D}=\text{diag}\{d_{1}, \cdots, d_{n}\}$
is the degree matrix with $d_i = \sum_{j}{\mathbf{A}_{ij}}$. 

Following GCN, we add a self-connecting edge is to each node,
and conduct symmetric normalization on $\mathcal{L}$ and $\mathbf{A}$.
The resulted \textit{self-looped symmetric-normalized} 
adjacency matrix and Laplacian are denoted as 
$\tilde{\mathbf{P}} = (\mathbf{D}+\mathbf{I})^{-1/2} (\mathbf{A}+\mathbf{I}) (\mathbf{D}+\mathbf{I})^{-1/2}$ and $\tilde{\mathcal{L}}
= \mathbf{I} - \tilde{\mathbf{P}}
$, respectively. 


Further, we attach each node with an $f$-dimensional raw feature and denote the feature matrix as $\mathbf{X} \in \mathbb{R}^{n \times f}$. 
Based on the the topology of underlying graph, 
GNNs enhance the raw node features to better representations for downstream tasks, 
such as node classification or link prediction.

\subsection{Spatial Background}
\subsubsection*{\ul Layer-wise Message Passing Architecture}
From a spatial view, 
the main body of a Graph Neural Network is a stack of \textit{convolution} layers, 
who broadcast and aggregate feature information along the edges. 
Such a graph neural network is also called a Message Passing Neural Network (MPNN).
To be concrete, 
we consider an MPNN with $K$ graph convolution layers 
and denote the nodes' representations of the $\ell$-th layer as $\mathbf{H}^{(\ell)}$. 
$\mathbf{H}^{(\ell)}$ is constructed 
based on $\mathbf{H}^{(\ell-1)}$
by an \textit{propagation} and  possibly a \textit{transformation} operation. 
For example, in Vanilla GCN~\cite{kipf2016semi}, 
the convolution layer is defined as 
\begin{align}
    \label{eq:gcn_conv}
    \mathbf{H}^{(\ell)} = \sigma( \tilde{\mathbf{P}}\mathbf{H}^{(\ell-1)} \mathbf{W}^{(\ell)}) ),
\end{align}
whose $propagation$ operator is $\mymathhl{f: {\mathbf{H}}^{(\ell-1)} \to \tilde{\mathbf{P}}\mathbf{H}^{(\ell-1)}}$ 
and \textit{transformation} operator is $\mymathhl{f: \tilde{\mathbf{P}}\mathbf{H}^{(\ell-1)} \to 
\sigma( \tilde{\mathbf{P}}\mathbf{H}^{(\ell-1)} \mathbf{W}^{(\ell)}) )}$. 

Extra transformations, or probably combinations,  
are applied before and after the stack of 
$K$ convolution layers 
to form a map from  $\mathbf{X}$ to the {final output} $\hat{\mathbf{Y}}$, 
which is determined by the downstream task, 
\textit{e.g.} 
$\mathbf{H}^{(0)}=\text{MLP}(\mathbf{X}; \mathbf{W}^{(0)})$ 
and $\hat{\mathbf{Y}}=\text{SoftMax}(\text{MLP}(\mathbf{H}^{(K)}; \mathbf{W}^{(K+1)}))$ 
for node classification tasks. 
In this paper, a slight difference in notation lies in that, 
we denote the input of convolution layers to be  $\mathbf{H}^{*}$, instead of $\mathbf{H}^{(0)}$,  
and introduce notations $\mathbf{H}^{(-1)}$ and $\mathbf{H}^{(-2)}$ as zero matrices
for simplicity of later representation.


\subsubsection*{\ul Entangled and Disentangled Architectures}
The motivations for \textit{propagation} and \textit{transformation} 
in a convolution layer differ. 
\textit{Propagations} are related to
graph topology, 
analyzed as an analog of 
walks~\cite{keyulu2018jknet, wang2019improving, klicpera2019appnp, chen2020gcnii}, 
diffusion processes~\cite{zhao2021adaptive, klicpera2019diffusion, chamberlain2021grand}, 
etc.\ , 
while the 
\textit{entangling} of \textit{transformations} between \textit{propagations} 
follows behind the convention of deep learning. 
A GNN is classified under a disentangled architecture 
if the \textit{transformations} are disentangled from \textit{propagations}, such as APPNP~\cite{klicpera2019appnp} and GPRGNN~\cite{chien2021gprgnn}. 
Though it is raised in ~\cite{Zhang2022hinder} that 
the entangled architecture tends to cause model degradations, 
we observe that GCNII~\cite{chen2020gcnii} under the entangled architecture does not suffer from this problem. 
So we follow the use of entangled transformations as in Vanilla GCN, 
and leverage the identity mapping of weight matrices as in GCNII.

\subsection{Spectral Background}
\subsubsection*{\ul Spectral Definition of Convolution}
Graph spectral domain leverages the geometric structure of underlying graphs in another way~\cite{Shuman2013}.

Conduct eigen-decomposition on $\tilde{\mathcal{L}}$, 
i.e. $\tilde{\mathcal{L}} = \mathbf{U} \mathbf{\Lambda} \mathbf{U}^T$, 
the spectrum $\mathbf{\Lambda}$  
$= \text{diag} \{ \lambda_1, \cdots, \lambda_n \} $ 
is in non-decreasing order.  
Since $\tilde{\mathcal{L}}$ is real-symmetric, 
elements in $\mathbf{\Lambda}$ 
are real, 
and $\mathbf{U}$ is a complete set of $n$ orthonormal eigenvectors, 
which is used as a basis of \textit{frequency components} 
analogously to \textit{classic Fourier transform}
. 

Now consider a column in $\mathbf{X}$ as a \textit{graph signal} 
scattered on $\mathcal{V}$, 
denoted as $\mathbf{x} \in \mathbb{R}^{n}$. 
\textit{Graph Fourier transform} is defined as 
$\mymathhl{\hat{\mathbf{x}} := \langle \mathbf{U}, \mathbf{x}\rangle = \mathbf{U^T x}}$, 
which projects graph signal $\mathbf{x}$ to the \textit{frequency responses}  of basis components  $\hat{\mathbf{x}}$. 
It is then followed by {\textit{{modulation}}}, 
which can be presented as
$\mymathhl{\mathbf{\hat{x}}^{*}:= g_{\mathbf{\theta}}\mathbf{\hat{x}} = \text{diag}\{\theta_1, \cdots \theta_n \} \mathbf{\hat{x}}} $. 
After modulation, 
$\textit{inverse Fourier transform}$: $\mymathhl{\mathbf{x}^{*}:= \mathbf{U}\mathbf{{\hat{x}}^{*}}}$
transform $ \mathbf{{\hat{x}}^{*}} $ back to the spatial domain.
The three operations form a spectral definition of \textit{convolution}:
\begin{align}
    \label{eq:graph_fourier}
    g_{\theta} \star \mathbf{x} = \mathbf{U}g_{\theta}\mathbf{U}^T\mathbf{x}, 
\end{align}
which is also called \textit{spectral filtering}. 
Specifically, when $\theta_i = 1-\lambda_i$, 
$\mathbf{U}g_{\theta}\mathbf{U}^T\mathbf{x} \equiv \tilde{\mathbf{P}}\mathbf{x}$, 
giving an spectral explanation of GCN's convolution in Equation~\eqref{eq:gcn_conv}.

\subsubsection*{\ul Polynomial Filtering}
\label{sec:poly_filter}
The calculation of $\mathbf{U}$ in Equation~\eqref{eq:graph_fourier} is of prohibitively expensive. 
To avoid explict eigen-decomposition of 
${\mathbf{U}}$, 
$g_{\theta}$ is often defined as a polynomial function of a frequency component's corresponding eigenvalue parameterized by $\theta$, 
that is, 
$$
\mathbf{{\hat{x}}^{*}}_i = g_{\theta}(\lambda_i)\langle \mathbf{U}_i, \mathbf{x} \rangle.
$$
The spectral filtering process then becomes
\begin{align}
    \label{eq:poly_filter}
    g_{\theta} \star \mathbf{x} = \mathbf{U}g_{\theta}(\mathbf{\Lambda})\mathbf{U}^T\mathbf{x} 
                \equiv  g_{\theta}(\tilde{\mathcal{L}})\mathbf{x}, 
\end{align}
where $g_{\theta}(\tilde{\mathcal{L}})\mathbf{x}$ elimates eigen-decomposition 
and can be calculated in a localized way in $\mathcal{O}(|\mathcal{E}|)$
~\cite{Defferrard2016cheb, kipf2016semi}.

\begin{tcolorbox}[boxrule=0.2pt,height=30mm,boxrule=0.pt,valign=center,colback=blue!4!white]
\begin{definition}[Polynomial Filters]
    \label{def:Polynomial Filters}
    Consider a graph whose Laplacian matrix is $\mathcal{\tilde{L}}$ and 
    use the set of orthonormal eigenvectors of $\mathcal{\tilde{L}}$ 
    as the frequency basis, 
    a \textbf{polynomial filter} is a process that scales each frequency component of the input signal by $\mymathhl{g_{\theta}(\lambda)}$, 
    where $g_{\theta}$ is a polynomial function and $\lambda$ is the corresponding 
    eigenvalue of the frequency component.
\end{definition}
\end{tcolorbox}

Equivalently, we can define the filtering function on 
the spectrum of $\mathbf{\tilde{P}}$, instead of $\tilde{\mathcal{L}}$. 
Since $\mathbf{\tilde{P}} = \mathbf{I} - \tilde{\mathcal{L}}$, $\mathbf{\tilde{P}}$ and $\tilde{\mathcal{L}}$ share the same set of orthonormal eigenvectors $\mathbf{U}$, 
and the spectrum of  $\mathbf{\tilde{P}}$, denoted as $\mathbf{M} = \{ \mu_1, \cdots, \mu_n\}$, satisfies 
$\mu_i = 1- \lambda_i (i=1,\cdots, n)$ . 
Thus, the filtering function can be defined as $h_{\theta}$, 
where 
\begin{equation}
    \label{eq:poly_filter_definition}
    \mymathhl{h_{\theta}(\mu) \equiv g_{\theta}(1-\mu)}.
\end{equation}
In Section~\ref{sec:method}, 
for the brevity of presentation, 
we will use this equivalent definition.





\subsection{Polynomial Approximation and Chebyshev Polynomials}
\subsubsection*{\ul Polynomial Approximation}
Following 
the idea of polynomial filtering (Equation~\eqref{eq:poly_filter}), 
the problem 
then becomes the approximation of polynomial $g_{\theta}$. 
A line of work approximates $g_{\theta}$ by 
some truncated polynomial basis $\{\phi_i(x)\}_{i=0}^{i=K}$
up to the $K$-th order, 
\textit{i.e.}  
$$
    g_{\theta}(x) = \sum_{k=0}^{K} \theta_i \phi_i(x),
$$
where
$\vec{\theta} = \left[ \theta_0, \cdots, \theta_K \right]\in \mathbb{R}^{K+1}$ is the coefficients.   
In the field of polynomial filtering and spectral GNNs, 
different bases have been explored for $\{\phi_i(x)\}_{i=0}^{i=K}$, 
including Chebyshev basis~\cite{Defferrard2016cheb}, 
Bernstein basis~\cite{He2021bern}, 
Jacobi basis~\cite{Wang2022jacovi}, {etc}.

\subsubsection*{\ul Chebyshev Polynomials}
Chebyshev basis has been explored since early attempts for the approximation of $g_{\theta}$~\cite{Defferrard2016cheb}. 
Besides Chebyshev polynomials of the first kind ( $\{T_i(x)\}_{i=0}^{\infty}$ ), 
the second kind ( $\{U_i(x)\}_{i=0}^{\infty}$ ) is also wildly used. 

Both $\{T_i(x)\}_{i=0}^{\infty}$ and $\{U_i(x)\}_{i=0}^{\infty}$ 
can be generated by a \textit{recurrence relation}: 
\begin{align}
        \label{eq:Urec}
        &T_{0}(x)  = 1, \quad
        T_{1}(x)  = x, \nonumber \\ 
        &T_{n}(x) = 2xT_{n-1}(x) - T_{n-2}(x). \quad(n=2, 3, \cdots)  
        \nonumber  \\ \nonumber \\
        &U_{-1}(x)  = 0, \quad
        U_{0}(x) = 1, 
        U_{1}(x)  = 2x,\nonumber   \\
        &U_{n}(x) = 2xU_{n-1}(x) - U_{n-2}(x). \quad(n=1, 2, \cdots)
\end{align}



The recurrence relation is used in ChebyNet~\cite{Defferrard2016cheb} 
for accelarating the computing of polynomial filtering.
Note that we start the second kind from $U_{-1}$, which will be used in later proof in Section~\ref{sec:neg_residue}.

\subsection{Residual Network Structures}
We have discussed some graph residual connections in Introduction . 
In this section, we list some model in detail for illustration of residual connections.

\textbf{GCNII} 
equips 
the vanilla GCN convolution with two techniques: initial residue and identity mapping:   
\begin{equation}
        \label{eqn:gcnii_analysis}
        \hspace{-0.7mm} \mathbf{H}^{(\ell)} \hspace{-0.7mm}= \hspace{-0.7mm}
      \sigma  \hspace{-0.7mm}\left(  \hspace{-0.7mm}\left( \hspace{-0.5mm}  (1  \hspace{-0.7mm}-  \hspace{-0.7mm}\alpha)\tilde{\mathbf{P}}
            \mathbf{H}^{(\ell-1)}  \hspace{-0.7mm} +  \hspace{-0.7mm}
            \alpha\mathbf{H}^{*}  \hspace{-0.7mm}\right)  \hspace{-0.7mm}
          \left(  \hspace{-0.5mm}   (1  \hspace{-0.7mm} -  \hspace{-0.7mm}\beta_\ell) \mathbf{I}_n \hspace{-0.7mm} +
            \hspace{-0.7mm} \beta_\ell \mathbf{W}^{(\ell)}  \hspace{-0.7mm}\right)  \hspace{-0.7mm}\right).
        \end{equation}

Ignoring non-linear transformation, GCNII 
iteratively solves the optimization problem:
\begin{equation}
    \label{eq:opt_initial}
    \underset{\mathbf{H}}{\arg \min} \quad  
    {\alpha}\left\|\mathbf{H}-\mathbf{H}^{*}\right\|_F^2+ (1-\alpha) \operatorname{tr}\left(\mathbf{H}^{\top} \tilde{\mathcal{L}} \mathbf{H}\right).   
\end{equation}

The optimization goal reveals the underlying 
\textit{homophily assumption}, 
where initial residual connection is only making a \textit{compromise} 
between Laplacian smoothing and keeping identity.

\textbf{AirGNN}~\cite{Liu2021Air} proposes an extension for initial residue, 
where the first term of optimization problem in Equation~\eqref{eq:opt_initial} 
is replaced by the $\ell_{21}$ norm. 
By solving the optimization problem, AirGNN adaptively choose $\alpha$. 
However, limited by the optimization goal, 
AirGNN still falls into the homophility assumption. 

\textbf{JKNet}~\cite{keyulu2018jknet} 
uses dense residual connection at the last layer  
and combine all the intermediate representations nodewisely by different ways, 
including LSTM, Max-Pooling and so on. 

\textbf{MixHop}~\cite{abu2019mixhop} 
concats feature maps of several hops at each layer, represented as
\begin{equation*}
\mathbf{H}^{(\ell+1)} =\underset{j \in K}{\Bigg\Vert} \sigma\left(\tilde{\mathbf{P}}^j \mathbf{H}^{(\ell)} \mathbf{W}_j^{(\ell)}\right),  
\end{equation*}
which can be considered as staking several dense graph residual networks.

\section{Method}
\label{sec:method}

\subsection{Clenshaw Convolution}


We formulate the $\ell$-th layer's representation of ClenshawGCN as 
 \begin{equation}
    \label{eq:clenshaw_conv}
            \hspace{-0.7mm} \mathbf{H}^{(\ell)} \hspace{-0.7mm}= \hspace{-0.7mm}
           \sigma  \hspace{-0.7mm}
           \left(  \hspace{-0.7mm}
                \left(
                    2\mathbf{\tilde{P}}\mathbf{H}^{(\ell-1)} 
                    \mymathhl{-\mathbf{H}^{(\ell-2)}}
                    + 
                    \mymathhl{\alpha_{\ell}\mathbf{H}^{*}}
                \right)
                \hspace{-0.7mm}
              \left(  \hspace{-0.5mm}   (1  \hspace{-0.7mm} -  \hspace{-0.7mm}\beta_\ell) \mathbf{I}_n \hspace{-0.7mm} +
                \hspace{-0.7mm} \beta_\ell \mathbf{W}^{(\ell)}  \hspace{-0.7mm}\right)  \hspace{-0.7mm}\right),
        \end{equation}
where $\ell = 0, 1, \ldots , K$, $\mathbf{H}^{(-2)}=\mathbf{H}^{(-1)}=\mathbf{O}$, 
$\mathbf{H}^{*} = \text{MLP}(\mathbf{X}; \mathbf{W^{*}})$.

Note that 
for \textit{transformation}, we use identity mapping 
with $\beta_{\ell} = \log (\frac{\lambda}{\ell} + 1) \approx \lambda / \ell$
following GCNII~\cite{chen2020gcnii}. 
Comparing to GCNII, we include two simple yet effective residual connections: 
\textbf{Adaptive Initial Residue} and \textbf{Negative Second Order Residue}:

 \begin{equation*}
        2\mathbf{\tilde{P}}\mathbf{H}^{(\ell-1)} 
        \underbrace{\mymathhl{-\mathbf{H}^{(\ell-2)}}}_{\substack{\text{
                    \textit{Negative} \underline{Second}} \\ \text{\underline{Order} Residue}}}
        + 
        \underbrace{\mymathhl{\alpha_{\ell}\mathbf{H}^{*}}
        .
        }_{ 
        \substack{
            \text{\textit{Adaptive}} \\ \text{\underline{Initial} Residue}
            }
        }
\end{equation*}


In the remaining part of this section, we will illustrate the role and mechanism of these two residual modules. 
In summary, 
\textit{adaptive initial residue} enables the simulating of any $K$-order polynomial filter, 
while \textit{negative second order residue}, 
motivated by the leveraging of `differencing relations', 
simulates the use of Chebyshev basis in the approximation of filtering functions.
For both two parts, 
we will give an intuitive analysis
with followed by a proofs.


\subsection{Adaptive Initial Residue}

The role of {\ul \textit{adaptive}} {\ul \textit{initial residue}} is to enable 
the expressive power of \textit{any} $K$-order polynomial filter.
To illustrate this, 
we will first consider an \textbf{incomplete} version of the ClenshawGCN termed HornerGCN. 

\subsubsection*{\underline{Intuition}}
We start with an analysis of GCNII. 
To simplify the analysis, we consider 
$(1 - \beta_\ell) \mathbf{I}_n  +\beta_\ell \mathbf{W}^{(\ell)} $
as $\mathbf{I}$, 
and take $\textrm{relu}(x) = x$. 
At this point, the iteration~\eqref{eqn:gcnii_analysis} is simplified as
\begin{equation}
    \label{eqn:gcnii_analysis_simple}
    \hspace{-0.7mm} \mathbf{H}^{(\ell)} =
        (1  \hspace{-0.7mm}-  \hspace{-0.7mm}\alpha)\tilde{\mathbf{P}}
        \mathbf{H}^{(\ell-1)}  \hspace{-0.7mm} +  \hspace{-0.7mm}
        \alpha\mathbf{H}^{*}  .
\end{equation}

Consider a GCNII model of order $K$\footnote{For simplicity, we term a model 
with at most $K$ propagations as of order $K$}, 
by expanding \eqref{eqn:gcnii_analysis_simple},  
we obtain that 

\begin{equation}
    \label{eqn:gcnii_analysis_unfolded}
    \mathbf{H}^{(K)}= \sum_{\ell=0}^{K} \hat{\alpha}_{\ell} \tilde{\mathbf{P}}^{\ell} \mathbf{H}^{*}, 
\end{equation}
where 
$$
\hat{\alpha}_{\ell} =
\left\{\begin{matrix}
    \alpha(1-\alpha)^{\ell}, & \ell<  K, \\ 
     (1-\alpha)^{K}, & \ell = K.
\end{matrix}\right.
$$

It can be seen that the iterative process of GCNII 
implicitly leverages the representations of different layers 
with \textit{fixed} and \textit{positive} coefficients.  
However, we want \textit{flexible} exploitation of different diffusion layers, \cite{keyulu2018jknet,zhao2021adaptive}. 
More importantly, we need to have \textit{negative} weights 
in cases of heterophilic graphs \cite{chien2021gprgnn}.
To  this end, we can simply replace the fixed $\alpha$ in 
\eqref{eqn:gcnii_analysis}
with \textit{learnable} ones. 
In this way, we obtain
the form of HornerGCN
as follows,  

\begin{equation}
        \label{eq:horner_conv}
            \hspace{-0.7mm} \mathbf{H}^{(\ell)} \hspace{-0.7mm}= \hspace{-0.7mm}
           \sigma  \hspace{-0.7mm}
           \left(  \hspace{-0.7mm}
                \left(
                    \mathbf{\tilde{P}}\mathbf{H}^{(\ell-1)} 
                    +
                    \mymathhl{\alpha_{\ell}\mathbf{H}^{*}}
                \right)
                \hspace{-0.7mm}
              \left(  \hspace{-0.5mm}   (1  \hspace{-0.7mm} -  \hspace{-0.7mm}\beta_\ell) \mathbf{I}_n \hspace{-0.7mm} +
                \hspace{-0.7mm} \beta_\ell \mathbf{W}^{(\ell)}  \hspace{-0.7mm}\right)  \hspace{-0.7mm}\right),
\end{equation}
where $\ell = 0, 1, \ldots, K$, $\mathbf{H}^{(-1)}=\mathbf{O}$, 
$\mathbf{H}^{*} = \text{MLP}(\mathbf{X}; \mathbf{W^{*}})$.

\subsubsection*{\underline{Spectral Nature}}
With the definition of \textit{polynomial filters} given in 
Equation~\eqref{eq:poly_filter_definition} and Definition~\ref{def:Polynomial Filters}, 
we will prove the Theorem below:
\begin{tcolorbox}[boxrule=0.2pt,height=40mm,boxrule=0.pt,valign=center,colback=blue!4!white]
    \begin{theorem}
    \label{thm:horner}
    A $K$-order HornerGCN 
    defined in Equation~\eqref{eq:horner_conv}, 
    when consider 
    $(1 - \beta_\ell) \mathbf{I}_n  +\beta_\ell \mathbf{W}^{(\ell)} $ for each $\ell$
    as 
    $\mathbf{I}$, 
    and $\textrm{relu}(x)$ as $x$, 
    simulates a \textit{polynomial filter} on the monomial basis: 
    
    $$ 
    h(\mu) = \sum_{\ell=0}^{K}{\alpha}_{K-\ell} \mu^{\ell},
    $$ 
    
    where $ \{\alpha_{\ell}\}_{\ell=0}^{K}$ is the set of 
    \textit{initial residue coefficients}.   
    \end{theorem}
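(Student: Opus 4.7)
The plan is to prove the theorem by straightforward induction on the layer index $\ell$, recognizing the simplified HornerGCN recurrence as Horner's scheme for polynomial evaluation applied at the matrix $\tilde{\mathbf{P}}$. Under the stated simplifications the recurrence collapses to
\[
\mathbf{H}^{(\ell)} = \tilde{\mathbf{P}} \mathbf{H}^{(\ell-1)} + \alpha_\ell \mathbf{H}^{*},\qquad \mathbf{H}^{(-1)}=\mathbf{O},
\]
so the entire argument is a routine unfolding plus an appeal to the definition of a polynomial filter.

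First, I would show by induction on $\ell$ that
\[
\mathbf{H}^{(\ell)} = \sum_{k=0}^{\ell} \alpha_{\ell-k}\, \tilde{\mathbf{P}}^{k}\, \mathbf{H}^{*}.
\]
The base case $\ell = 0$ follows immediately from $\mathbf{H}^{(-1)}=\mathbf{O}$, giving $\mathbf{H}^{(0)} = \alpha_0 \mathbf{H}^{*}$. For the inductive step, I would substitute the hypothesis into the recurrence, multiply through by $\tilde{\mathbf{P}}$, shift the summation index, and absorb the $\alpha_\ell \mathbf{H}^{*}$ term as the new $k=0$ summand. Setting $\ell = K$ then yields $\mathbf{H}^{(K)} = \sum_{k=0}^{K} \alpha_{K-k}\, \tilde{\mathbf{P}}^{k}\, \mathbf{H}^{*}$.

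Second, I would translate this spatial expression into the spectral language used in Definition~\ref{def:Polynomial Filters}. Using the eigendecomposition $\tilde{\mathbf{P}} = \mathbf{U}\,\mathrm{diag}(\mu_1,\ldots,\mu_n)\,\mathbf{U}^{T}$, each power $\tilde{\mathbf{P}}^k$ becomes $\mathbf{U}\,\mathrm{diag}(\mu_i^k)\,\mathbf{U}^{T}$, so the closed form rewrites as $\mathbf{U}\,\mathrm{diag}(h(\mu_i))\,\mathbf{U}^{T}\mathbf{H}^{*}$ with $h(\mu) = \sum_{k=0}^{K}\alpha_{K-k}\mu^{k}$. Invoking the equivalence in Equation~\eqref{eq:poly_filter_definition} (filters defined on the spectrum of $\tilde{\mathbf{P}}$ rather than $\tilde{\mathcal{L}}$), this is exactly the polynomial filter claimed in the theorem statement.

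There is no genuine obstacle; the only subtlety worth being careful about is indexing. The layer index runs from $\ell=-1$ to $\ell=K$, so I need to keep the reversal between polynomial degree $k$ and residue index $K-k$ consistent throughout, and make sure the zero initialization $\mathbf{H}^{(-1)}=\mathbf{O}$ is what makes the induction base case land on $\alpha_0 \mathbf{H}^{*}$ rather than an off-by-one variant. Once the indexing is pinned down, the proof is essentially a one-line observation that HornerGCN is Horner's rule in disguise — which is exactly why the model is named HornerGCN, and motivates extending to the full ClenshawGCN in the next subsection.
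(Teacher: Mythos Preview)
Your proposal is correct and follows essentially the same approach as the paper: the paper unfolds the simplified recurrence by explicitly invoking Horner's rule (displaying the nested form and expanding it layer by layer), while you phrase the same unfolding as a formal induction, and both then pass to the spectral side via the eigendecomposition of $\tilde{\mathbf{P}}$. The indexing subtlety you flag is exactly the only point requiring care, and you have it right.
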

\end{tcolorbox}

\subsubsection*{\underline{Horner's Method}}
To prove Theorem~\ref{thm:horner}, 
we first briefly introduce Horner's Method \cite{horner1819xxi}.
Given $p(x) = \sum_{i=0}^{n} a_i x^i = a_0 + a_1 x + \cdots a_n x^n$, 
Horner's Method is a classic method for evaluating 
$p(x_0)$, 
by  
viewing  $p(x)$ as the following form:
\begin{align}
    \label{eq:horner_form}
    p(x)=a_0+x\left(a_1+x\left(a_2+x\left(a_3+\cdots+x\left(a_{n-1}+x a_n\right) \cdots\right)\right)\right) .
\end{align}

Thus, Horner's method defines a recursive method
for evaluating $p(x_0)$: 

\begin{align}
    \label{eq:horner_recursive}
    b_n &:=a_n,  \nonumber \\
    b_{n-1} &:=a_{n-1}+b_n x_0, \nonumber  \\
    &\cdots  \nonumber  \\
    b_0 &:=a_0+b_1 x_0,  \nonumber  \\
    p(x_0) &:= b_0.
\end{align}


\subsubsection*{\underline{Proof of Spectral Expressiveness}}
Note that the form of Horner's recursive 
are parallel with the recursive of stacked Horner convolutions by ignoring the non-linear transformations.
Thus, by unfolding the nested expression of Horner convolutions \eqref{eq:horner_conv}, 
we get the output of the last layer closely matched with the form of Equation~\eqref{eq:horner_form} 
: 
\begin{tcolorbox}[boxrule=0.2pt,height=45mm,boxrule=0.pt,valign=center,colback=blue!4!white]
    \begin{align}
            \label{eqn:horner_expansion_layerwise}
            \hspace{-0.7mm} 
            \mathbf{H}^{(0)} &= \alpha_{0}\mathbf{H}^{*},  \nonumber \\
            \mathbf{H}^{(1)} &= \tilde{\mathbf{P}} \left( \alpha_{0}\mathbf{H}^{*} \right)  + \alpha_{1}\mathbf{H}^{*}, 
            \nonumber \\
            \cdots \nonumber \\ 
            \mathbf{H}^{(K)} &=  \tilde{\mathbf{P}} \left(  \cdots \left(
                            \tilde{\mathbf{P}} \left( 
                                            \tilde{\mathbf{P}} \left( \alpha_{0}\mathbf{H}^{*} \right)  + \alpha_{1}\mathbf{H}^{*} 
                                            \right) + \alpha_{2}\mathbf{H}^{*}
                            \right) \cdots \right) + \alpha_{K}\mathbf{H}^{*}
                            \nonumber \\
                            &= \alpha_K\mathbf{H}^{*} + \alpha_{K-1} \tilde{\mathbf{P}} \mathbf{H}^{*} + \cdots + \alpha_{0} \tilde{\mathbf{P}}^{K} 
                            \nonumber \\
                            &= \sum_{\ell=0}^{K} \alpha_{K-\ell} \tilde{\mathbf{P}}^{\ell} \mathbf{H}^{*}. 
    \end{align}
\end{tcolorbox}


So, the final representation 
\begin{align*}
    \mathbf{H}^{(K)} &= 
\sum_{\ell=0}^{K} \alpha_{K-\ell} \tilde{\mathbf{P}}^{\ell} \mathbf{H}^{*}
= \mathbf{U} 
    \left(
        \sum_{\ell=0}^{K}{\alpha}_{K-\ell} \mathbf{M}^{\ell} 
    \right)
    \mathbf{U}^T \mathbf{H}^{*} ,
\end{align*}
corresponds to the result of 
a polynomial filter $h$ on the spectrum of $\mathbf{\tilde{P}}$, where 
$$
    h(\mu) = 
    \sum_{\ell=0}^{K}{\alpha}_{K-\ell} \mu^{\ell}.
$$



\subsection{Negative Second Order Residue}
\label{sec:neg_residue}

The role of {\ul \textit{negative second order residue}} is to enable the leverage of Chebyshev basis. 
Compared with the adaptive initial residue, it is not obvious. 
In this section, we will first give an \textbf{intuitive} motivation 
for Negative Second Order Residue, 
and then reveal the mechanism behind it by Clenshaw Summation Algorithm.

\subsubsection*{\underline{Intuition of Taking Difference}}
There is already some work that has noticed the `subtraction' relationship between progressive levels of representation~\cite{abu2019mixhop, Yang2022difference}. 
For example, one of MixHop's direct challenges towards the traditional GCN model is the inability to represent the \textit{Delta Operator}, \textit{i.e.}, 
$\mathbf{AX}^2\textcolor{red}{-}\mathbf{AX}$, 
which is an important relationship in representing both the concept of social boundaries~\cite{perozzi2018socialBoundary} and the concept of `sharpening' in images~\cite{burt1987laplacian, paris2011local}.

Instead of considering model's ability to represent \textit{Delta Operator} 
in the final output, 
we take a more direct use of these difference relations. 
That is, we add
$
\mymathhl{\mathbf{\tilde{P}}\mathbf{H}^{(\ell-1)} -\mathbf{H}^{(\ell-2)}}
$
to \textit{each} convolution layers of \eqref{eq:horner_conv}, 
and get the final form of Clenshaw Convolution:
$$
\mathbf{\tilde{P}}\mathbf{H}^{(\ell-1)} + \alpha_{\ell}\mathbf{H}^{*}
+ 
\mymathhl{\mathbf{\tilde{P}}\mathbf{H}^{(\ell-1)} -\mathbf{H}^{(\ell-2)}} 
\Longrightarrow 
\eqref{eq:clenshaw_conv}.
$$

A question that perhaps needs to be answered is: why not instead 
insert a more direct difference relation: 
$
\underline{\mathbf{\tilde{P}}\mathbf{H}^{(\ell-1)} -\mathbf{H}^{(\ell-1)}}
$
?
The reason is, 
at this point, the convolution would become
$$
\mathbf{\tilde{P}}\mathbf{H}^{(\ell-1)} + \alpha_{\ell}\mathbf{H}^{*}
+ 
\underline{\mathbf{\tilde{P}}\mathbf{H}^{(\ell-1)} -\mathbf{H}^{(\ell-1)}}
= 
\left(2\mathbf{\tilde{P}}-\mathbf{I}\right)\mathbf{H}^{(\ell-1)}
+ \alpha_{\ell}\mathbf{H}^{*},
$$
whose unfolded form, 
by substitute $\mathbf{\tilde{P}}$ to $2\mathbf{\tilde{P}}- \mathbf{I}$ in \eqref{eqn:horner_expansion_layerwise},  
would become $ \sum_{\ell=0}^{K} \alpha_{K-\ell} 
{\left(2\tilde{\mathbf{P}}-\mathbf{I}\right)}^{\ell} 
\mathbf{H}^{*}  $
, 
 which brings limited change.

\subsubsection*{\underline{Spectral Nature}}
Besides the spatial intuition of considering substractions or boundaries, 
we reveal the spectral nature of our \textit{negative residual connection} in this part, that is, it simulates the leverage of chebyshev basis as in spectral polynomials.
With the definition of \textit{polynomial filters} given in 
Equation~\eqref{eq:poly_filter_definition} and Definition~\ref{def:Polynomial Filters}, 
we will prove the theorem below:
\begin{tcolorbox}[boxrule=0.2pt,height=45mm,boxrule=0.pt,valign=center,colback=blue!4!white]
\begin{theorem}
    \label{thm:spatial_to_spectral}
    A $K$-order ClenshawGCN 
    defined in Equation~\eqref{eq:clenshaw_conv}, 
    when consider 
    $(1 - \beta_\ell) \mathbf{I}_n  +\beta_\ell \mathbf{W}^{(\ell)} $ for each $\ell$
    as 
    $\mathbf{I}$, 
    and $\textrm{relu}(x)$ as $x$, 
    simulates a \textit{polynomial filter} 
    $$ 
    h(\mu) = \sum_{\ell=0}^{K}{\alpha}_{K-\ell} U_{\ell}(\mu),
    $$ 
    where $ \{U_{\ell}\}_{\ell=0}^{K}$  is the truncated $K$-order  
    second-kind Chebyshev basis, 
    and $ \{\alpha_{\ell}\}_{\ell=0}^{K}$ is the set of 
    initial residue coefficients.   
    \end{theorem}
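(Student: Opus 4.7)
}
The plan is to exhibit the ClenshawGCN forward recurrence as a disguised instance of the classical Clenshaw summation algorithm applied to the matrix $\tilde{\mathbf{P}}$, with the coefficient list read off in reverse. Once the ClenshawGCN iterates are identified with the Clenshaw partial sums, the spectral statement follows by diagonalizing $\tilde{\mathbf{P}}$ through $\mathbf{U}$, exactly as in the proof of Theorem~\ref{thm:horner}.

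\paragraph*{Step 1: State the Clenshaw identity for the second-kind Chebyshev basis.} For coefficients $\{a_k\}_{k=0}^{K}$, define the backward sequence
\begin{equation*}
 b_{K+2} = b_{K+1} = 0, \qquad b_k = a_k + 2x\,b_{k+1} - b_{k+2}\ \ (k=K,K-1,\ldots,0).
\end{equation*}
I would verify by a short downward induction on $k$, using $U_n(x) = 2x U_{n-1}(x) - U_{n-2}(x)$ with $U_{-1}=0, U_0=1$, that
\begin{equation*}
 \sum_{k=0}^{K} a_k U_k(x) \;=\; b_0.
\end{equation*}
This is the textbook Clenshaw identity for $\{U_k\}$, and it is the only non-trivial polynomial fact I will use.

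\paragraph*{Step 2: Match indices between the ClenshawGCN forward recurrence and the Clenshaw backward recurrence.} I would apply Step~1 with $x = \tilde{\mathbf{P}}$ and with the coefficients taken in reverse, that is $a_k := \alpha_{K-k}$. Then set $\ell := K-k$ so that $k$ descending from $K+2$ to $0$ corresponds to $\ell$ ascending from $-2$ to $K$, and define $\mathbf{H}^{(\ell)} := b_{K-\ell}\,\mathbf{H}^{*}$. The boundary conditions $b_{K+1}=b_{K+2}=0$ translate into $\mathbf{H}^{(-1)}=\mathbf{H}^{(-2)}=\mathbf{O}$, matching the convention stated below Equation~\eqref{eq:clenshaw_conv}. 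Substituting into $b_k = a_k + 2\tilde{\mathbf{P}} b_{k+1} - b_{k+2}$ and multiplying by $\mathbf{H}^{*}$ on the right gives
\begin{equation*}
 \mathbf{H}^{(\ell)} \;=\; 2\tilde{\mathbf{P}}\,\mathbf{H}^{(\ell-1)} - \mathbf{H}^{(\ell-2)} + \alpha_{\ell}\mathbf{H}^{*},
\end{equation*}
which is exactly the simplified Clenshaw convolution (Equation~\eqref{eq:clenshaw_conv} under the assumptions that $(1-\beta_\ell)\mathbf{I}+\beta_\ell\mathbf{W}^{(\ell)} = \mathbf{I}$ and $\mathrm{relu}(x)=x$). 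A straightforward forward induction on $\ell$ then confirms that the $\mathbf{H}^{(\ell)}$ produced layer by layer really do coincide with $b_{K-\ell}\mathbf{H}^{*}$.

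\paragraph*{Step 3: Conclude by diagonalization.} Combining Steps~1 and 2, $\mathbf{H}^{(K)} = b_0\,\mathbf{H}^{*} = \bigl(\sum_{\ell=0}^{K}\alpha_{K-\ell} U_{\ell}(\tilde{\mathbf{P}})\bigr)\mathbf{H}^{*}$. Using $\tilde{\mathbf{P}} = \mathbf{U}\mathbf{M}\mathbf{U}^T$ and the fact that $U_\ell$ is a polynomial, one has $U_\ell(\tilde{\mathbf{P}}) = \mathbf{U}\,U_\ell(\mathbf{M})\,\mathbf{U}^T$, so
\begin{equation*}
 \mathbf{H}^{(K)} \;=\; \mathbf{U}\Bigl(\sum_{\ell=0}^{K}\alpha_{K-\ell} U_\ell(\mathbf{M})\Bigr)\mathbf{U}^T\mathbf{H}^{*},
\end{equation*}
which by Definition~\ref{def:Polynomial Filters} and Equation~\eqref{eq:poly_filter_definition} is precisely the polynomial filter $h(\mu) = \sum_{\ell=0}^{K}\alpha_{K-\ell} U_\ell(\mu)$ applied to $\mathbf{H}^{*}$.

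\paragraph*{Main obstacle.} The only delicate point is the index reversal in Step~2: Clenshaw's algorithm is naturally a \emph{backward} recurrence whose final output is $b_0$, whereas ClenshawGCN runs \emph{forward} and reports $\mathbf{H}^{(K)}$. One must carefully check that the two boundary zeros align ($k=K{+}1,K{+}2$ on one side, $\ell=-1,-2$ on the other) and that reading the coefficients in reverse order yields $\alpha_\ell$ at the $\ell$-th layer rather than at layer $K-\ell$. Sanity-checking the correspondence at small $K$ (e.g.\ $K=2$, where expanding the recurrence gives $\mathbf{H}^{(2)} = (4\alpha_0\tilde{\mathbf{P}}^2 + 2\alpha_1\tilde{\mathbf{P}} + \alpha_2 - \alpha_0)\mathbf{H}^{*}$, matching $\alpha_0 U_2 + \alpha_1 U_1 + \alpha_2 U_0$ evaluated at $\tilde{\mathbf{P}}$) removes any residual ambiguity.
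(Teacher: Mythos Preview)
Your proposal is correct and follows essentially the same route as the paper: both arguments reduce to the Clenshaw summation identity for $\{U_k\}$ and then identify the forward ClenshawGCN layers with the reversed Clenshaw partial sums via the index substitution $\ell = K-k$. The only cosmetic difference is that the paper carries out the induction at the level of scalar filter functions $h^{(\ell)}(\mu)$ (diagonalizing at each step) whereas you work directly with matrix polynomials in $\tilde{\mathbf{P}}$ and diagonalize once at the end; the paper also proves the Clenshaw identity itself via a short matrix/Gaussian-elimination argument rather than the downward induction you suggest, but either verification is routine.
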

\end{tcolorbox}
which can also be expressed as:
\begin{align}
    \label{eq:hK}
    \mathbf{H}^{(K)} 
    = 
    \mathbf{U} 
    h({\mathbf{M}})
    \mathbf{U}^T \mathbf{H}^{*}
    = 
    \mathbf{U} 
    \left(
        \sum_{\ell=0}^{K}{\alpha}_{K-\ell} U_{\ell} (\mathbf{M}) 
    \right)
    \mathbf{U}^T \mathbf{H}^{*}.
\end{align}

\subsubsection*{\underline{Clenshaw Algorithm}}
For the proof of Theorem~\ref{thm:spatial_to_spectral}, 
we will first introduce Clenshaw Summation Algorithm 
as a Corollary. 

\begin{tcolorbox}[boxrule=0.pt,height=60mm,boxrule=0.pt,valign=center,colback=blue!4!white]
   \begin{corollary}[Clenshaw Summation Algorithm for Chebyshev polynomials (Second Kind)]
    \label{cor:clenshaw}
    For the Second Kind of Chebyshev Polynomials, 
    the weighted sum of a finite series of $\{U_{k}(x)\}_{k=0}^{n}$ :
    \[
        S(x) = \sum_{k=0}^{k=n}a_k U_k(x) 
        \] 
    can be computed 
    by a recurrence formula:
    \begin{align}
        \label{eq:clenshaw_recurrence}
        b_{n+2}(x) &:= 0,  \nonumber
        \\
        b_{n+1}(x) &:= 0, \nonumber
        \\
        b_k(x) &:= a_k + 2x b_{k+1}(x) - b_{k+2}(x). \nonumber
        \\ &\quad\quad (k=n, n-1, ..., 0)
    \end{align} 
    Then $S(x) \equiv b_0(x)$.
\end{corollary}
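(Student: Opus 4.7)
The plan is to encode the finite sum $S(x)=\sum_{k=0}^{n} a_k U_k(x)$ as a matrix-vector product and then exploit the Chebyshev three-term recurrence to collapse most of the entries to zero. Let $\vec{u} = (U_{-1}(x),U_0(x),\ldots,U_n(x))^T$ and $\vec{a}=(0,a_0,a_1,\ldots,a_n)^T$, both indexed from $-1$ to $n$, so that $S(x)=\vec{a}^T\vec{u}$. Define a lower-triangular matrix $A\in\mathbb{R}^{(n+2)\times(n+2)}$ with $1$ on the diagonal, $-2x$ on the first subdiagonal, and $1$ on the second subdiagonal. I would start by verifying, row by row, that $A\vec{u}=\mathbf{1}_0$, where $\mathbf{1}_0$ is the one-hot vector whose only nonzero entry is a $1$ in the position indexed by $0$. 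Indeed, the row indexed $-1$ gives $U_{-1}(x)=0$; the row indexed $0$ gives $-2x\cdot U_{-1}(x)+U_0(x)=1$; and the remaining rows give $U_k(x)-2x U_{k-1}(x)+U_{k-2}(x)=0$, which is exactly the recurrence~\eqref{eq:Urec} for $U_k$.

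Next I would introduce the auxiliary vector $\vec{b}=(b_{-1},b_0,\ldots,b_n)^T$ defined implicitly by $\vec{a}^T=\vec{b}^T A$. If such a $\vec{b}$ exists, then
\[
S(x)=\vec{a}^T\vec{u}=\vec{b}^T A\vec{u}=\vec{b}^T\mathbf{1}_0=b_0,
\]
which is the statement of the corollary.

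It therefore remains to show that the $b_k$ delivered by the Clenshaw recurrence~\eqref{eq:clenshaw_recurrence} solve the triangular system $\vec{b}^T A=\vec{a}^T$. Because $A$ is lower-triangular with unit diagonal, the system $\vec{b}^T A=\vec{a}^T$ can be solved by back-substitution from $k=n$ down to $k=0$: the $k$-th equation reads $b_k+(-2x)b_{k+1}+b_{k+2}=a_k$, i.e.\ $b_k=a_k+2xb_{k+1}-b_{k+2}$, with the natural boundary conventions $b_{n+1}=b_{n+2}=0$ arising from the fact that the system has only $n+2$ unknowns. This is word-for-word the recurrence~\eqref{eq:clenshaw_recurrence}, completing the argument.

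The only subtle step is the bookkeeping around the $-1$ and $0$ rows of $A\vec{u}$; this is what forces us to pad $\vec{a}$ with a leading zero and to start the Chebyshev list at $U_{-1}$, which is also why the paper introduces $U_{-1}(x)=0$ earlier in~\eqref{eq:Urec}. Once the indexing is set up correctly, the rest is bookkeeping: the Chebyshev recurrence zeros out all rows of $A\vec{u}$ except the $0$-th, and Gaussian back-substitution on the resulting triangular system is the Clenshaw recurrence itself.
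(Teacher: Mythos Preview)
Your proposal is correct and is essentially identical to the paper's own proof: both encode $S(x)=\vec{a}^T\vec{u}$, use the Chebyshev recurrence to show $A\vec{u}=\mathbf{1}_0$ for the same banded lower-triangular $A$, and then observe that back-substitution for $\vec{b}^T A=\vec{a}^T$ is exactly the Clenshaw recurrence. If anything, your write-up spells out the row-by-row verification and the back-substitution step a bit more explicitly than the paper does.
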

\end{tcolorbox}


Clenshaw Summation can be applied to a wider range of polynomial basis. 
However, specifically for the second kind of Chebyshev, 
we made some slight simplifications\footnote{To be more precise, the simplification 
lies in that Clenshaw Summation procedures for the more general situation 
need an `extra' different step which is not needed in our proof.}, 
so for clarity of discussion, we give the proof here. 

\begin{proof}[Proof for Corollary~\ref{cor:clenshaw}]
Denote 
$$
A =  \begin{bmatrix}
            1 &  &  &  & \\  -2x & 1 &  &  & \\  1& -2x &1  &  & \\ 
             &  & \cdots &  & \\  &  &  1&  -2x & 1 
    \end{bmatrix}, \quad
\vec{u} =   \begin{bmatrix}
                    U_{-1}(x)\\U_{0}(x)\\U_{1}(x)\\\cdots\\U_n(x)
            \end{bmatrix}, \quad
\vec{a} = \begin{bmatrix}
    0\\a_{0}\\a_{1}\\\cdots\\a_n
\end{bmatrix},
$$
with $A \in \mathbb{R}^{(n+2) \times (n+2)}$, $\vec{u} \in \mathbb{R}^{n+2}$, 
then   
\begin{align}
    \label{eq:s}
    S(x) =\vec{a}^T \vec{u}.
\end{align}
Indexing a vector from $-1$, 
we denote $\mathbf{1}_i \in \mathbb{R}^{(n+2)}$ as 
the one-hot vector with the $i$-th element being $1$ ($i$ starts from $-1$). 
Note that, since the recurrence relation for the Chebyshev polynomials~\eqref{eq:Urec},
\begin{align}
    \label{eq:cheb2-3-term}
    A \vec{u} =
    \begin{bmatrix}
    U_{-1}(x) \\ -2xU_{-1}(x) + U_{0}(x) \\ 0 \\ \cdots \\ 0 
    \end{bmatrix}
    =
    \begin{bmatrix}
    0 \\ U_{0}(x) \\ 0 \\ \cdots \\ 0 
    \end{bmatrix}
    =
    \mathbf{1}_0.
\end{align}

Now suppose that there is a vector $\vec{b}  = [ b_{-1}, b_0, \cdots, b_n  ]$ 
satisfying 
\begin{align}
    \label{eq:suppose_b}
    \vec{a}^T = \vec{b}^TA ,
\end{align}
then 
$$
S(x) \stackrel{\eqref{eq:s}}{=} \vec{a}^T \vec{u} 
\stackrel{\eqref{eq:suppose_b}}{=} \vec{b}^TA\vec{u} 
\stackrel{\eqref{eq:cheb2-3-term}}{=} \vec{b} \mathbf{1}_0 = b_0.
$$
On the other hand, 
notice that the recurrence defined in \eqref{eq:clenshaw_recurrence} 
is exactly the Gaussian Elimination process of solving  $\vec{a}^T = \vec{b}^TA $ 
from $b_n$ down to $b_0$, 
which means that $\{b_n, \cdots, b_0\}$ calculated by \eqref{eq:clenshaw_recurrence}  
satisfies \eqref{eq:suppose_b}. Proof for Corollary~\ref{cor:clenshaw} is finished.
\end{proof}

\subsubsection*{\underline{Proof of Spectral Expressiveness}} 
Now we prove Theorem~\ref{thm:spatial_to_spectral} inductively based on Corollary~\ref{cor:clenshaw}.
    
\begin{proof}[Proof of Theorem~\ref{thm:spatial_to_spectral}]
    \textbf{Given}: 
         \begin{equation}
    \label{eq:clenshaw_conv_linear}
             \mathbf{H}^{(\ell)} = 
                    2\mathbf{\tilde{P}}\mathbf{H}^{(\ell-1)} 
                    {-\mathbf{H}^{(\ell-2)}} + {\alpha_{\ell}\mathbf{H}^{*}}, 
    \end{equation}
    $\mathbf{H}^{(-1)}=\mathbf{H}^{(-2)}=\mathbf{0}$.

    \textbf{Induction Hypothesis}: 
    Suppose that when the convolutions have processed to the $\ell$-th layer, 
    $\mathbf{H}^{(\ell-1)}$ and $\mathbf{H}^{(\ell-2)}$ 
    are already proved to be polynomial filtered results of $\mathbf{H}^{*}$,  
    we show that $\mathbf{H}^{(\ell)}$ is also polynomial filtered results of $\mathbf{H}^{*}$.
    
    Further, denote  the polynomial filtering functions
    of generating $\mathbf{H}^{(\ell-2)}$, $\mathbf{H}^{(\ell-1)}$ and $\mathbf{H}^{(\ell)}$ 
    to be  $h^{(\ell-2)}$,  $h^{(\ell-1)}$ and $h^{(\ell)}$.   
    Then $h^{(\ell)}$ satisfies:    
    \begin{align*}
        h^{(\ell)}(\mu) &= \alpha_{\ell} + 2\mu h^{(\ell-1)}(\mu) - h^{(\ell-2)}(\mu)   .
    \end{align*}

\textbf{Base Case}: 
For $\ell=0$, since $\mathbf{H}^{(-2)}=\mathbf{H}^{(-1)}=\mathbf{0}$, 
$ \mathbf{H}^{(0)} = \alpha_0 \mathbf{H}^{*}$, 
the first induction step is established with 
$$
h^{(0)}(\mu) = \alpha_0, 
\quad h^{(-1)}(\mu) = 0, 
\quad  h^{(-2)}(\mu) = 0.
$$

\textbf{Induction Step}: 
Insert
\begin{align*}
    \left\{\begin{matrix}
            \mathbf{H}^{(\ell-1)} 
                &= \mathbf{U} h^{\ell-1}(\mathbf{M})\mathbf{U}^T\mathbf{H}^{*}
            , 
            \\
            \mathbf{H}^{(\ell-2)} 
                &= \mathbf{U} h^{\ell-2}(\mathbf{M})\mathbf{U}^T\mathbf{H}^{*},
            \\
            \mathbf{\tilde{P}} 
                &=  \mathbf{U} \mathbf{M}\mathbf{U}^T 
\end{matrix}\right.
\end{align*}
into Equation~\eqref{eq:clenshaw_conv_linear}, we get 
     \begin{align*}
             \mathbf{H}^{(\ell)} &= 
                    2\underline{\mathbf{U} \mathbf{M}\mathbf{U}^T}
                    \dashuline{\mathbf{U} h^{(\ell-1)}(\mathbf{M})\mathbf{U}^T\mathbf{H}^{*}}
                    - \dashuline{\mathbf{U} h^{(\ell-2)}(\mathbf{M})\mathbf{U}^T\mathbf{H}^{*}}
                    + {\alpha_{\ell}\mathbf{H}^{*}}
                    \\
                    &= \mathbf{U} \left(  
                    \alpha_{\ell} + 2\mathbf{M}h^{(\ell-1)}(\mathbf{M}) - h^{(\ell-2)}(\mathbf{M}) 
                    \right)
                    \mathbf{U}^T\mathbf{H}^{*}.
    \end{align*}
So, $\mathbf{H}^{(\ell)}$ is also a polynomial filtered result of $\mathbf{H}^{*}$, 
with filtering function $h^{(\ell)}$:  
\begin{align}
\label{eq:relation}
h^{(\ell)}(\mu) := \alpha_{\ell} + 2\mu h^{(\ell-1)}(\mu) - h^{(\ell-2)}(\mu)   .
\end{align}

Stack relation\eqref{eq:relation} for $\ell=0,1,\cdots,K$, 
we get: 
\begin{align*}
    h^{(-2)}(\mu) &= 0, \\
    h^{(-1)}(\mu) &= 0, \\
    h^{(\ell)}(\mu) &:= \alpha_{\ell} + 2\mu h^{(\ell-1)}(\mu) - h^{(\ell-2)}(\mu), \quad
    \\ &\quad\quad 
    (k=0, 1, \cdots, K).
\end{align*}
where the progressive access of 
$b^{(\ell)}$ 
is in a \textbf{totally parallel} way with the recurrence \eqref{eq:clenshaw_recurrence} in Clenshaw Summation Algorithm. We soonly get 
$$
\sum_{\ell=0}^{K}{\alpha}_{K-\ell} U_{\ell} (\mu)  \equiv  h^{(K)}(\mu).
$$
Thus, we have finshed the proof of Theorem~\ref{thm:spatial_to_spectral}
.
\end{proof}


\section{Experiments}
In this section, 
we conduct two sets of experiments with the node classification task. 
First, we verify the power of our method by comparing it with both spatial residual methods and powerful spectral models. 
Second, we verify the effectiveness of the two submodules by ablation studies. 

\subsection{Experimental Setup}
\subsubsection*{\underline{Datasets and Splits}}

We use both homophilic graphs and heterophilic graphs in our experiments 
following former works, especially GCN~\cite{kipf2016semi}, Geom-GCN~\cite{Pei2020GeomGCN} and LINKX~\cite{Lim2021large}. 
\begin{itemize}[leftmargin= * ]
    \item \textit{Citation Graphs}.    
        Cora, PubMed, and CiteSeer are citation datasets~\cite{sen2008collective} 
        processed by Planetoid~\cite{yang2016revisiting}. 
        In these graphs, nodes are scientific publications, edges are citation links processed to be bidirectional, 
        and node features are bag-of-words representations of the documents. These graphs show strong homophily. 
    \item \textit{Wikipedia Graphs}. 
        Chameleon dataset and Squirrel dataset are page-page networks on topics in Wikipedia, 
        where nodes are entries, and edges are mutual links. 
    \item \textit{Webpage Graphs}. 
        Texas dataset and Cornell dataset 
        collect web pages from computer science departments of different universities. 
        The nodes in the graphs are web pages of students, projects, courses, staff or faculties~\cite{craven1998learning}, 
        the edges are hyperlinks between them, 
        and node features are the bag-of-words representations of these web pages. 
    \item \textit{Co-occurrence Network}. 
        The Actor network represents the co-occurrence of actors on a Wikipedia page~\cite{tang2009social}. 
        The node features are filtered keywords in the Wikipedia pages. 
        The categorization of the nodes is done by~\cite{Pei2020GeomGCN}.
    \item \textit{Mutual follower Network}. 
        Twitch-Gamers dataset represents the mutual following relationship between accounts on the streaming platform Twitch.  
\end{itemize}

We list the messages of these networks in Table~\ref{tbl:datasets}, where 
$\mathcal{H}(G)$ is the measure of homophily in a graph proposed by Geom-GCN~\cite{Pei2020GeomGCN}. 
Larger $\mathcal{H}(G)$ implies stronger homophily. 

\begin{table}[htp]
    \centering
    \caption{Statistics for the node classification datasets we use. 
    Datasets of different homophily degrees are used.  }
    \resizebox{\columnwidth}{!}{%
    \begin{tabular}{crrcc}
    \toprule
    Dataset       & \#Nodes       & \#Edges         & \#Classes &  $\mathcal{H}(G)$   \\ 
    \midrule
    Cora          & 2,709   & 5,429     & 7 &   .83   \\
    Pubmed        & 19,717  & 44,338    & 3 &   .71   \\
    Citeseer      & 3,327   & 4,732     & 6 &   .79   \\ 
    Squirrel      & 5,201   & 217,073   & 5 &  .22    \\
    Chameleon     & 7,600   & 33,544    & 5 &  .23    \\
    Texas         & 183     & 309       & 5 &  .11    \\
    Cornell       & 183     & 295       & 5 &   .30   \\
    Twitch-Gamers & 168,114 & 6,797,557 & 7 & .55 \\ 
    \bottomrule
    \end{tabular}
    \label{tbl:datasets}
    }
    \end{table}

For all datasets except for the Twitch-gamers dataset, 
we take a 60\%/20\%/20\% train/validation/test split proportion 
following former works~\cite{Pei2020GeomGCN, chien2021gprgnn, He2021bern, he2022chebii}.
We run these datasets twenty times over random splits with random initialization seeds. 
For the Twitch-gamers dataset, 
we use the five random splits given in LINKX~\cite{Lim2021large} with a 50\%/25\%/25\% proportion
to align with reported results. 

\subsubsection*{\underline{ClenshawGCN Setup}}
Before and after the stack of Clenshaw convolution layers, 
two non-linear transformations are made to link with the dimensions of the raw features and final class numbers.     
All the intermediate transformation layers are set with 64 hidden units.  
For the initialization of the adaptive initial residues' coefficients, denoted as 
$\vec{\alpha} = \left[  \alpha_0, \cdots, \alpha_K \right]$, 
we simply set ${\alpha_K}$ to be $1$ and all the other coefficients to be $0$, 
which corresponds to initializing
the polynomial filter to be $\mymathhl{g(\lambda) = 1}$ 
(or equivalently, $\mymathhl{h(\mu) = 1}$ ). 

\subsubsection*{\underline{Hyperparameter Tuning}}
For the optimization process on the training sets, we tune $\vec{\alpha}$ with SGD optimizer with momentum~\cite{sutskever2013momentumImportance} 
and all the other parameters with Adam SGD~\cite{kingma2014adam}. 
We use early stopping with a patience of 300 epochs.

For the search space of hyperparameters, 
below is the search space of hyperparameters:
\begin{itemize}[leftmargin=7mm]
    \item Orders of convolutions: $K \in \left\{ 8, 12, \cdots, 32 \right\}$;
    \item Learning rates: $ \left\{ 0.001, 0.005,  0.1,  0.2,  0.3, 0.4, 0.5   \right\}$; 
    \item Weight decays: $\left\{ 1\mathrm{e}{-8}, 1\mathrm{e}{-7}, \cdots,  1\mathrm{e}{-3}   \right\}$;
    \item Dropout rates: $\left\{ 0,  0.1,  \cdots,  0.7   \right\}$.
\end{itemize}

We tune all the hyperparameters on the validation sets. 
To accelerate hyperparameter searching, we use Optuna~\cite{akiba2019optuna} and run 100 completed  
trials \footnote{
In Optuna, a \textit{trial} means a run with hyperparameter combination; 
the term `complete' refers to that, 
some trials of bad expectations would be pruned before completion.
}
for each dataset.

\subsection{Comparing ClenshawGCN with Other Graph Residual Connections} 

\begin{table*}[htp]
\centering
\small
\caption{
Comparison with other models equipped by different kinds of residual connections.
Mean classification accuracies {(± standard derivation)}
of random splits are displayed. Besides the ClenshawGCN,
all the results are taken directly from \cite{Luan2021snowball} and \cite{Lim2021large}.
For the Twitch-Gamer dataset, we use 5 fixed 50\%/25\%/25\% splits
given in \cite{Lim2021large} to align with the reported results. 
For all the other datasets, 20 random 60\%/20\%/20\%  splits were used. 
(M) denotes some hyperparameter settings run out of memory \cite{Lim2021large}.
}
\resizebox{1.98\columnwidth}{!}{%
\begin{tabular}{@{}llllllllll@{}}
\toprule
  Datasets &Chameleon &Squirrel &Actor &Texas &Cornell & Cora & Citeseer & Pubmed & Twitch-gamer \\
  $|\mathcal{V}|$ &2,277 & 5,201 & 7,600 & 183  & 183 & 2,708 & 3,327 & 19,717 & 168,114\\
\midrule
{MLP} &
  46.59±1.84 &
  31.01±1.18 &
  40.18±0.55 &
  86.81±2.24 &
  84.15±3.05 &
  76.89±0.97 &
  76.52±0.89 &
  86.14±0.25 &
  60.92±0.07 \\
GCN &
  60.81±2.95 &
  45.87±0.88 &
  33.26±1.15 &
  76.97±3.97 &
  65.78±4.16 &
  87.18±1.12 &
  79.85±0.78 &
  86.79±0.31 &
  62.18±0.26 \\
\midrule
GCNII &
  63.44±0.85 &
  41.96±1.02 &
  36.89±0.95 &
  80.46±5.91 &
  84.26±2.13 &
  {\ul 88.46±0.82} &
  {\ul 79.97±0.65} &
  89.94±0.31 &
  63.39±0.61 \\
$\text{H}_2$GCN &
  52.30±0.48 &
  30.39±1.22 &
  {\ul 38.85±1.17} &
  {\ul 85.90±3.53} &
  {\ul 86.23±4.71} &
  87.52±0.61 &
  {\ul 79.97±0.69} &
  87.78±0.28 &
  (M) \\
MixHop &
  36.28±10.22 &
  24.55±2.60 &
  33.13±2.40 &
  76.39±7.66 &
  60.33±28.53 &
  65.65±11.31 &
  49.52±13.35 &
  87.04±4.10 &
  {\ul 65.64±0.27}\\
GCN+JK &
  64.68±2.85 &
  {\ul 53.40±1.90} &
  32.72±2.62 &
  80.66±1.91 &
  66.56±13.82 &
  86.90±1.51 &
  73.77±1.85 &
  {\ul 90.09±0.68} &
  {63.45±0.22} \\
  \midrule
ClenshawGCN &
  \textbf{69.44±2.06} & 
  \textbf{62.14±1.65} &
  \textbf{42.08±1.99} &
  \textbf{93.36±2.35} &
  \textbf{92.46±3.72} &
  \textbf{88.90±1.26} &
  \textbf{80.34±1.26} &
  \textbf{91.99±0.41} & 
  \textbf{66.26±0.27} \\ \bottomrule
\end{tabular}%
\label{tb:full_residual}
}
\end{table*}

\begin{table*}[htp]
\centering

\small
\caption{
Comparison with spectral models.
Mean classification accuracies {(±95\% confidence intervals)}
on 20 random 60\%/20\%/20\% train/validation/test splits are displayed. 
Besides the ClenshawGCN, 
all the results are taken directly from \cite{He2021bern}.
}

\resizebox{1.98\columnwidth}{!}{%
\begin{tabular}{@{}lllllllll@{}}
\toprule
  Datasets &Chameleon &Squirrel &Actor &Texas &Cornell & Cora & Citeseer & Pubmed \\
  $|\mathcal{V}|$ &2,277 & 5,201 & 7,600 & 183  & 183 & 2,708 & 3,327 & 19,717 \\
  \midrule

ChebNet &
  59.51±1.25 &
  40.81±0.42 &
  37.42±0.58 &
  86.28±2.62 &
  83.91±2.17 &
  87.32±0.92 &
  79.33±0.57 &
  87.82±0.24 \\
ARMA &
  60.21±1.00 &
  36.27±0.62 &
  37.67±0.54 &
  83.97±3.77 &
  85.62±2.13 &
  87.13±0.80 &
  80.04±0.55 &
  86.93±0.24 \\
APPNP &
  52.15±1.79 &
  35.71±0.78 &
  39.76±0.49 &
  90.64±1.70 &
  91.52±1.81 &
  88.16±0.74 &
  80.47±0.73 &
  88.13±0.33 \\
GPRGNN &
  67.49±1.38 &
  50.43±1.89 &
  39.91±0.62 &
  92.91±1.32 &
  91.57±1.96 &
  88.54±0.67 &
  80.13±0.84 &
  88.46±0.31 \\
BernNet &
  68.53±1.68 &
  51.39±0.92 &
  41.71±1.12 &
  92.62±1.37 &
  92.13±1.64 &
  88.51±0.92 &
  80.08±0.75 &
  88.51±0.39 \\
ChebNetll &
  \textbf{71.37±1.01} &
  {\ul 57.72±0.59} &
  {\ul 41.75±1.07} &
  {\ul 93.28±1.47} &
  {\ul 92.30±1.48} &
  {\ul 88.71±0.93} &
  \textbf{80.53±0.79} &
  {\ul 88.93±0.29} \\
  \midrule
ClenshawGCN &
  {\ul 69.44±0.92} & 
  \textbf{62.14±0.70} &
  \textbf{42.08±0.86} &
  \textbf{93.36±0.99} &
  \textbf{92.46±1.64} &
  \textbf{88.90±0.59} &
  {\ul 80.34±0.57} &
  \textbf{91.99±0.17} \\ 
  \bottomrule
\end{tabular}%
\label{tb:full_many}
}
\end{table*}
\label{sec:compare_residual}
In this subsection, we illustrate the effectiveness of ClenshawGCN's residual connections 
by comparing ClenshawGCN with other spatial models, including
GCNII~\cite{chen2020gcnii}, $\text{H}_2$GCN~\cite{zhu2020beyond}, 
MixHop~\cite{abu2019mixhop} and JKNet~\cite{keyulu2018jknet}. 
Among them, GCNII is equipped with initial residual connections, 
and the others are equipped with dense residual connections. 
Moreover, the way of combining multi-scale representations is more complex than weighted sum in $\text{H}_2$GCN and JKNet. 
As shown in Table~\ref{tb:full_residual}, our ClenshawGCN outperforms all the baselines. 

On one hand, in line with our expectations, 
ClenshawGCN outperforms all the baselines on the {\ul heterophilic datasets} by a significantly large margin 
including {\ul $\text{H}_2$GCN}, which is tailored for heterophilic graphs. 
This illustrates the effectiveness of borrowing spectral characteristics. 

On the other hand, 
ClenshawGCN even shows an advantage over {\ul homophilic datasets}, 
though the compared spatial models, such as GCNII and JKNet, 
are strong baselines on such datasets. 
Especially, for the PubMed dataset, 
ClenshawGCN achieves state of art.

\subsection{Comparing ClenshawGCN with Spectral Baselines} 
In Section~\ref{sec:neg_residue}, 
we have proved that ClenshawGCN acts as a spectral model
and simulates any $K$-order polynomial filter based on
$\left\{U_{\ell}  \right\}_{\ell=0}^{\ell=K}$.
In this subsection, we compare ClenshawGCN with strong spectral GNNs, 
including 
ChebNet~\cite{Defferrard2016cheb}, 
APPNP~\cite{klicpera2019appnp}, 
ARMA~\cite{bianchi2021arma}, 
GPRGNN~\cite{chien2021gprgnn}, 
BernNet~\cite{He2021bern}, 
and ChebNetII~\cite{he2022chebii}. 
Among them, APPNP simulates polynomial filters with \textit{fixed} parameters, 
ARMA GNN simulates ARMA filters~\cite{narang2013signalarma}, 
and the rest simulate \textit{learnable} polynomial filters based on the Chebyshev basis, Monomial basis or Bernstein basis. 


As reported in Table~\ref{tb:full_many}, 
ClenshawGCN outperforms almost all the baselines on each dataset, except for Chameleon and Citeseer. 
Specifically, ClenshawGCN outperforms other models on the Squirrel dataset 
by a large margin of $7.66\%$. 

Note the comparison between ClenshawGCN and {ChebNetII}. 
ChebNetII gains \textit{extra power} from 
the leveraging of Chebyshev nodes, 
which is crucial for polynomial interpolation.   
However, without the help of Chebyshev nodes,  
ClenshawGCN is comparable to ChebNetII in performance. 
The extra power of ClenshawGCN may come from the entangled non-linear transformations.

\subsection{Ablation Analysis}

\begin{figure}[htp]
    \includegraphics[width=\columnwidth]{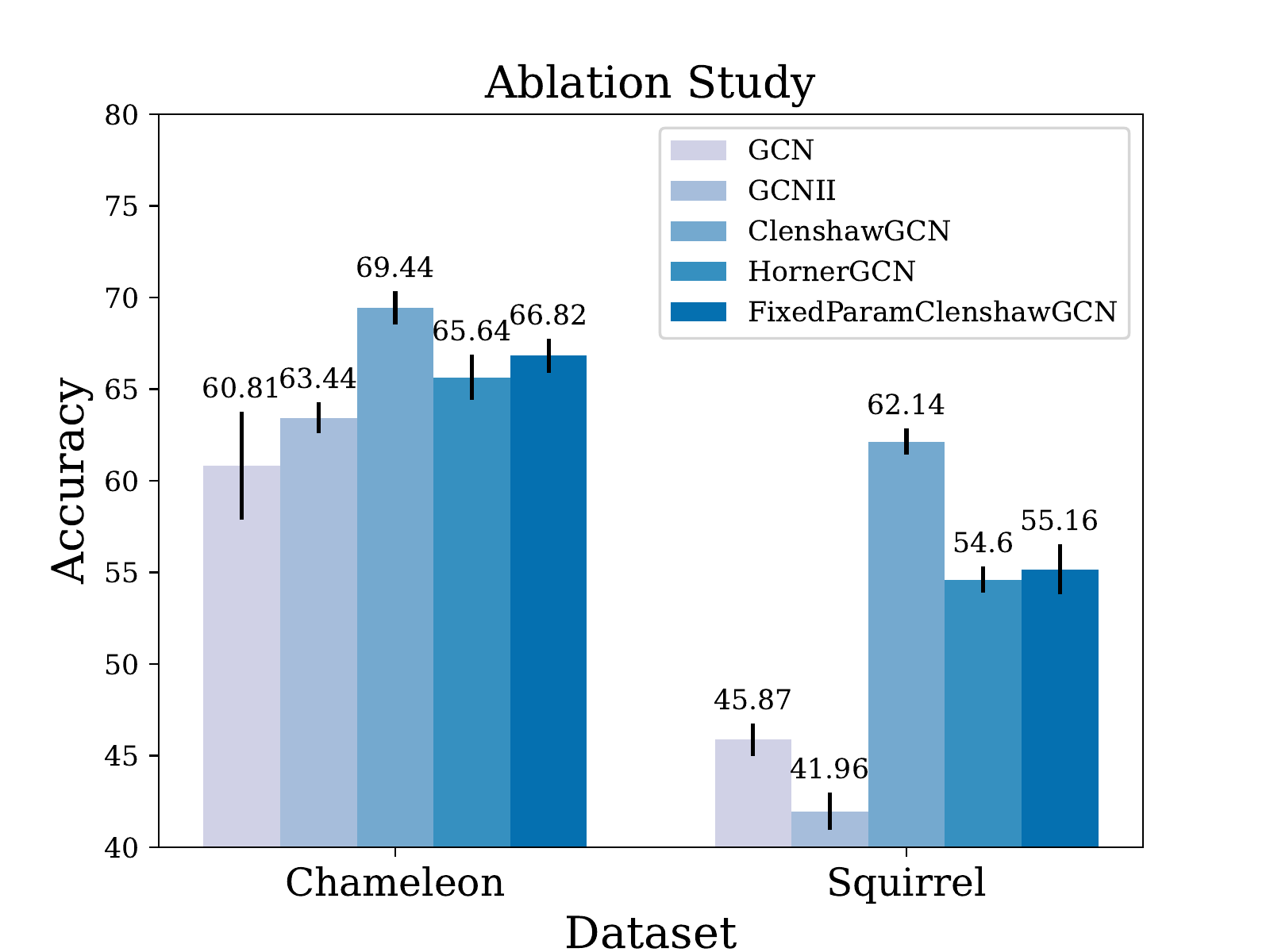}
    \caption{Results of the ablation study. HornerGCN and FixedParamClenshawGCN are weakened versions of ClenshawGCN. 
    HornerGCN is only equipped with adaptive initial residue, 
    and FixedParamClenshawGCN is only equipped with negative second-order residue. 
    The performance of these two ablation models are worse than a complete ClenshawGCN 
    but outperform GCN and GCNII.}
    \label{fig:ablation}
    \Description{figure description}
\end{figure}

For ClenshawGCN, the core of the design is the two residual connection modules. 
In this section, 
we conduct ablation analyses on these two modules to verify their contribution. 

\subsubsection*{\underline{Ablation Model: HornerGCN}}
We use HornerGCN as an ablation model to verify 
the contribution of \textit{negative residues}.
Recal Theorem~\ref{thm:horner}, 
the corresponding polynomial filter of HornerGCN is 
$$
    h_{Horner}(\mu) = \sum_{\ell=0}^{K}{\alpha}_{K-\ell} \mu^{\ell}, 
$$
which uses the Monomial basis. While the complete form of 
our ClenshawGCN borrows the use of chebyshev basis by negative residues.

\subsubsection*{\underline{Ablation Model: FixedParamClenshawGCN}}
In the FixedParamClenshawGCN model, 
we verify the contribution of \textit{adaptive} initial residue 
by fixing 
$\vec{\alpha} = \left[\hat{\alpha}_0, \hat{\alpha}_1, \cdots, \hat{\alpha}_K \right]$ with 
$$
    \hat{\alpha}_{\ell} = {\alpha}(1-{\alpha})^{K-\ell}, \quad \hat{\alpha}_{0} =(1-\alpha)^K  
$$
following APPNP~\cite{klicpera2019appnp}, 
where 
$ \alpha \in \left[ 0,1 \right] $ 
is a hyperparameter. 
The corresponding polynomial filter of FixedParamClenshawGCN is: 
$$
    h_{Fix}(\mu) = \sum_{\ell=0}^{K}{\hat{\alpha}}_{K-\ell} U_{\ell}(\mu).
$$

\subsubsection*{\underline{Analysis}}
We compare the performance of HornerGCN and FixedParamClenshawGCN with 
GCN, GCNII and ClenshawGCN on two median-sized datasets: Chameleon and Squirrel. 
As shown in Figure~\ref{fig:ablation}, either removing the negative second-order residue 
or fixing the initial residue causes an obvious drop in Test Accuracy. 

Noticeably, 
with only one residual module, 
HornerGCN and FixedParamClenshawGCN still outperform 
homophilic models such as GCNII. 
For HornerGCN, the reason is obvious: HornerGCN simulates any polynomial filter, 
which is favorable for heterophilic graphs.   
For FixedParamClenshawGCN, though the coefficients of $U_{\ell}(\mu)$ are fixed, 
the contribution of each fused level (\textit{i.e.} $\mathbf{\tilde{P}^{\ell}}\mathbf{H}^{*}$) 
is no longer definitely to be \textit{positive} as in GCNII since each chebyshev polynomial 
consists of terms with \textit{alternating signs}, \textit{e.g.} $ U_4(\mathbf{\tilde{P}}) =  
16 \mathbf{\tilde{P}}^4 \textcolor{red}{-} 12 \mathbf{\tilde{P}}^2 \textcolor{red}{+} 1 $,  which  
breaks the underlying homophily assumption.

\section{Conclusion}
In this paper, we propose 
ClenshawGCN, a GNN model equipped with a 
novel and neat residual connection module
that is able to mimic a spectral polynomial filter. 
When generating node representations for the next layer(\textit{i.e.} $\mathbf{H}^{(\ell+1)}$),
our model should only connect to the initial layer(\textit{i.e.} $\mathbf{H}^{(0)}$) \textit{adaptively},
and to the {second last layer}(\textit{i.e.} $\mathbf{H}^{(\ell-1)}$) \textit{negatively}.
The construction of 
this residual connection inherently uses {Clenshaw Summation Algorithm}, 
a numerical evaluation algorithm for calculating weighted sums of Chebyshev basis. 
We prove that 
our model 
implicitly simulates \textit{any} polynomial filter based on the \textit{second-kind Chebyshev basis} entangle with non-linear layers, bringing it at least comparable expressive power with state-of-art polynomial spectral GNNs.  
Experiments demonstrate our model's  superiority 
either compared with other spatial models with residual connections or with spectral models. 

For future work, 
a promising direction is to further investigate the 
mechanism and potential of such 
spectrally-inspired models entangled with non-linearity, 
which seems be able to incorporate the strengths of both sides.

\bibliographystyle{ACM-Reference-Format}
\bibliography{references}


\begin{thebibliography}{48}


\ifx \showCODEN    \undefined \def \showCODEN     #1{\unskip}     \fi
\ifx \showDOI      \undefined \def \showDOI       #1{#1}\fi
\ifx \showISBNx    \undefined \def \showISBNx     #1{\unskip}     \fi
\ifx \showISBNxiii \undefined \def \showISBNxiii  #1{\unskip}     \fi
\ifx \showISSN     \undefined \def \showISSN      #1{\unskip}     \fi
\ifx \showLCCN     \undefined \def \showLCCN      #1{\unskip}     \fi
\ifx \shownote     \undefined \def \shownote      #1{#1}          \fi
\ifx \showarticletitle \undefined \def \showarticletitle #1{#1}   \fi
\ifx \showURL      \undefined \def \showURL       {\relax}        \fi
\providecommand\bibfield[2]{#2}
\providecommand\bibinfo[2]{#2}
\providecommand\natexlab[1]{#1}
\providecommand\showeprint[2][]{arXiv:#2}

\bibitem[Abu-El-Haija et~al\mbox{.}(2019)]%
        {abu2019mixhop}
\bibfield{author}{\bibinfo{person}{Sami Abu-El-Haija}, \bibinfo{person}{Bryan
  Perozzi}, \bibinfo{person}{Amol Kapoor}, \bibinfo{person}{Nazanin
  Alipourfard}, \bibinfo{person}{Kristina Lerman}, \bibinfo{person}{Hrayr
  Harutyunyan}, \bibinfo{person}{Greg Ver~Steeg}, {and} \bibinfo{person}{Aram
  Galstyan}.} \bibinfo{year}{2019}\natexlab{}.
\newblock \showarticletitle{Mixhop: Higher-order graph convolutional
  architectures via sparsified neighborhood mixing}. In
  \bibinfo{booktitle}{\emph{international conference on machine learning}}.
  PMLR, \bibinfo{pages}{21--29}.
\newblock


\bibitem[Akiba et~al\mbox{.}(2019)]%
        {akiba2019optuna}
\bibfield{author}{\bibinfo{person}{Takuya Akiba}, \bibinfo{person}{Shotaro
  Sano}, \bibinfo{person}{Toshihiko Yanase}, \bibinfo{person}{Takeru Ohta},
  {and} \bibinfo{person}{Masanori Koyama}.} \bibinfo{year}{2019}\natexlab{}.
\newblock \showarticletitle{Optuna: A next-generation hyperparameter
  optimization framework}. In \bibinfo{booktitle}{\emph{Proceedings of the 25th
  ACM SIGKDD international conference on knowledge discovery \& data mining}}.
  \bibinfo{pages}{2623--2631}.
\newblock


\bibitem[Bianchi et~al\mbox{.}(2021)]%
        {bianchi2021arma}
\bibfield{author}{\bibinfo{person}{Filippo~Maria Bianchi},
  \bibinfo{person}{Daniele Grattarola}, \bibinfo{person}{Lorenzo Livi}, {and}
  \bibinfo{person}{Cesare Alippi}.} \bibinfo{year}{2021}\natexlab{}.
\newblock \showarticletitle{Graph neural networks with convolutional arma
  filters}.
\newblock \bibinfo{journal}{\emph{IEEE Transactions on Pattern Analysis and
  Machine Intelligence}} (\bibinfo{year}{2021}).
\newblock


\bibitem[Burt and Adelson(1987)]%
        {burt1987laplacian}
\bibfield{author}{\bibinfo{person}{Peter~J Burt} {and}
  \bibinfo{person}{Edward~H Adelson}.} \bibinfo{year}{1987}\natexlab{}.
\newblock \showarticletitle{The Laplacian pyramid as a compact image code}.
\newblock In \bibinfo{booktitle}{\emph{Readings in computer vision}}.
  \bibinfo{publisher}{Elsevier}, \bibinfo{pages}{671--679}.
\newblock


\bibitem[Chamberlain et~al\mbox{.}(2021)]%
        {chamberlain2021grand}
\bibfield{author}{\bibinfo{person}{Ben Chamberlain}, \bibinfo{person}{James
  Rowbottom}, \bibinfo{person}{Maria~I Gorinova}, \bibinfo{person}{Michael
  Bronstein}, \bibinfo{person}{Stefan Webb}, {and} \bibinfo{person}{Emanuele
  Rossi}.} \bibinfo{year}{2021}\natexlab{}.
\newblock \showarticletitle{Grand: Graph neural diffusion}. In
  \bibinfo{booktitle}{\emph{International Conference on Machine Learning}}.
  PMLR, \bibinfo{pages}{1407--1418}.
\newblock


\bibitem[Chen et~al\mbox{.}(2020)]%
        {chen2020gcnii}
\bibfield{author}{\bibinfo{person}{Ming Chen}, \bibinfo{person}{Zhewei Wei},
  \bibinfo{person}{Zengfeng Huang}, \bibinfo{person}{Bolin Ding}, {and}
  \bibinfo{person}{Yaliang Li}.} \bibinfo{year}{2020}\natexlab{}.
\newblock \showarticletitle{Simple and deep graph convolutional networks}. In
  \bibinfo{booktitle}{\emph{ICML}}. PMLR, \bibinfo{pages}{1725--1735}.
\newblock


\bibitem[Chien et~al\mbox{.}(2021)]%
        {chien2021gprgnn}
\bibfield{author}{\bibinfo{person}{Eli Chien}, \bibinfo{person}{Jianhao Peng},
  \bibinfo{person}{Pan Li}, {and} \bibinfo{person}{Olgica Milenkovic}.}
  \bibinfo{year}{2021}\natexlab{}.
\newblock \showarticletitle{Adaptive Universal Generalized PageRank Graph
  Neural Network}. In \bibinfo{booktitle}{\emph{ICLR}}.
\newblock


\bibitem[Craven et~al\mbox{.}(1998)]%
        {craven1998learning}
\bibfield{author}{\bibinfo{person}{Mark Craven}, \bibinfo{person}{Andrew
  McCallum}, \bibinfo{person}{Dan PiPasquo}, \bibinfo{person}{Tom Mitchell},
  {and} \bibinfo{person}{Dayne Freitag}.} \bibinfo{year}{1998}\natexlab{}.
\newblock \bibinfo{booktitle}{\emph{Learning to extract symbolic knowledge from
  the World Wide Web}}.
\newblock \bibinfo{type}{{T}echnical {R}eport}.
  \bibinfo{institution}{Carnegie-mellon univ pittsburgh pa school of computer
  Science}.
\newblock


\bibitem[Defferrard et~al\mbox{.}(2016)]%
        {Defferrard2016cheb}
\bibfield{author}{\bibinfo{person}{Michaël Defferrard},
  \bibinfo{person}{Xavier Bresson}, {and} \bibinfo{person}{Pierre
  Vandergheynst}.} \bibinfo{year}{2016}\natexlab{}.
\newblock \showarticletitle{Convolutional Neural Networks on Graphs with Fast
  Localized Spectral Filtering}.
\newblock  (\bibinfo{date}{6} \bibinfo{year}{2016}).
\newblock
\urldef\tempurl%
\url{http://arxiv.org/abs/1606.09375}
\showURL{%
\tempurl}


\bibitem[Duvenaud et~al\mbox{.}(2015)]%
        {duvenaud2015convolutional}
\bibfield{author}{\bibinfo{person}{David~K Duvenaud}, \bibinfo{person}{Dougal
  Maclaurin}, \bibinfo{person}{Jorge Iparraguirre}, \bibinfo{person}{Rafael
  Bombarell}, \bibinfo{person}{Timothy Hirzel}, \bibinfo{person}{Al{\'a}n
  Aspuru-Guzik}, {and} \bibinfo{person}{Ryan~P Adams}.}
  \bibinfo{year}{2015}\natexlab{}.
\newblock \showarticletitle{Convolutional networks on graphs for learning
  molecular fingerprints}.
\newblock \bibinfo{journal}{\emph{Advances in neural information processing
  systems}}  \bibinfo{volume}{28} (\bibinfo{year}{2015}).
\newblock


\bibitem[Fan et~al\mbox{.}(2019)]%
        {fan2019graph}
\bibfield{author}{\bibinfo{person}{Wenqi Fan}, \bibinfo{person}{Yao Ma},
  \bibinfo{person}{Qing Li}, \bibinfo{person}{Yuan He}, \bibinfo{person}{Eric
  Zhao}, \bibinfo{person}{Jiliang Tang}, {and} \bibinfo{person}{Dawei Yin}.}
  \bibinfo{year}{2019}\natexlab{}.
\newblock \showarticletitle{Graph neural networks for social recommendation}.
  In \bibinfo{booktitle}{\emph{The world wide web conference}}.
  \bibinfo{pages}{417--426}.
\newblock


\bibitem[He et~al\mbox{.}(2016)]%
        {He2016residual}
\bibfield{author}{\bibinfo{person}{Kaiming He}, \bibinfo{person}{Xiangyu
  Zhang}, \bibinfo{person}{Shaoqing Ren}, {and} \bibinfo{person}{Jian Sun}.}
  \bibinfo{year}{2016}\natexlab{}.
\newblock \showarticletitle{Deep Residual Learning for Image Recognition}. In
  \bibinfo{booktitle}{\emph{Proceedings of the IEEE Conference on Computer
  Vision and Pattern Recognition (CVPR)}}.
\newblock


\bibitem[He et~al\mbox{.}(2021)]%
        {He2021bern}
\bibfield{author}{\bibinfo{person}{Mingguo He}, \bibinfo{person}{Zhewei Wei},
  \bibinfo{person}{Zengfeng Huang}, {and} \bibinfo{person}{Hongteng Xu}.}
  \bibinfo{year}{2021}\natexlab{}.
\newblock \showarticletitle{BernNet: Learning Arbitrary Graph Spectral Filters
  via Bernstein Approximation}.
\newblock \bibinfo{journal}{\emph{Advances in Neural Information Processing
  Systems}}  \bibinfo{volume}{34} (\bibinfo{date}{6} \bibinfo{year}{2021}),
  \bibinfo{pages}{14239--14251}.
\newblock
\urldef\tempurl%
\url{http://arxiv.org/abs/2106.10994}
\showURL{%
\tempurl}


\bibitem[He et~al\mbox{.}(2022)]%
        {he2022chebii}
\bibfield{author}{\bibinfo{person}{Mingguo He}, \bibinfo{person}{Zhewei Wei},
  {and} \bibinfo{person}{Ji-Rong Wen}.} \bibinfo{year}{2022}\natexlab{}.
\newblock \showarticletitle{Convolutional Neural Networks on Graphs with
  Chebyshev Approximation, Revisited}.
\newblock \bibinfo{journal}{\emph{arXiv preprint arXiv:2202.03580}}
  (\bibinfo{year}{2022}).
\newblock


\bibitem[Horner(1819)]%
        {horner1819xxi}
\bibfield{author}{\bibinfo{person}{William~George Horner}.}
  \bibinfo{year}{1819}\natexlab{}.
\newblock \showarticletitle{XXI. A new method of solving numerical equations of
  all orders, by continuous approximation}.
\newblock \bibinfo{journal}{\emph{Philosophical Transactions of the Royal
  Society of London}} \bibinfo{number}{109} (\bibinfo{year}{1819}),
  \bibinfo{pages}{308--335}.
\newblock


\bibitem[Iandola et~al\mbox{.}(2014)]%
        {iandola2014densenet}
\bibfield{author}{\bibinfo{person}{Forrest Iandola}, \bibinfo{person}{Matt
  Moskewicz}, \bibinfo{person}{Sergey Karayev}, \bibinfo{person}{Ross
  Girshick}, \bibinfo{person}{Trevor Darrell}, {and} \bibinfo{person}{Kurt
  Keutzer}.} \bibinfo{year}{2014}\natexlab{}.
\newblock \showarticletitle{Densenet: Implementing efficient convnet descriptor
  pyramids}.
\newblock \bibinfo{journal}{\emph{arXiv preprint arXiv:1404.1869}}
  (\bibinfo{year}{2014}).
\newblock


\bibitem[Kingma and Ba(2014)]%
        {kingma2014adam}
\bibfield{author}{\bibinfo{person}{Diederik~P Kingma} {and}
  \bibinfo{person}{Jimmy Ba}.} \bibinfo{year}{2014}\natexlab{}.
\newblock \showarticletitle{Adam: A method for stochastic optimization}.
\newblock \bibinfo{journal}{\emph{arXiv preprint arXiv:1412.6980}}
  (\bibinfo{year}{2014}).
\newblock


\bibitem[Kipf and Welling(2017)]%
        {kipf2016semi}
\bibfield{author}{\bibinfo{person}{Thomas~N Kipf} {and} \bibinfo{person}{Max
  Welling}.} \bibinfo{year}{2017}\natexlab{}.
\newblock \showarticletitle{Semi-supervised classification with graph
  convolutional networks}. In \bibinfo{booktitle}{\emph{ICLR}}.
\newblock


\bibitem[Klicpera et~al\mbox{.}(2019a)]%
        {klicpera2019appnp}
\bibfield{author}{\bibinfo{person}{Johannes Klicpera},
  \bibinfo{person}{Aleksandar Bojchevski}, {and} \bibinfo{person}{Stephan
  G{\"u}nnemann}.} \bibinfo{year}{2019}\natexlab{a}.
\newblock \showarticletitle{Predict then propagate: Graph neural networks meet
  personalized pagerank}. In \bibinfo{booktitle}{\emph{ICLR}}.
\newblock


\bibitem[Klicpera et~al\mbox{.}(2019b)]%
        {klicpera2019diffusion}
\bibfield{author}{\bibinfo{person}{Johannes Klicpera}, \bibinfo{person}{Stefan
  Wei{\ss}enberger}, {and} \bibinfo{person}{Stephan G{\"u}nnemann}.}
  \bibinfo{year}{2019}\natexlab{b}.
\newblock \showarticletitle{Diffusion improves graph learning}.
\newblock \bibinfo{journal}{\emph{arXiv preprint arXiv:1911.05485}}
  (\bibinfo{year}{2019}).
\newblock


\bibitem[Li et~al\mbox{.}(2019)]%
        {Li2019deep}
\bibfield{author}{\bibinfo{person}{Guohao Li}, \bibinfo{person}{Matthias
  Müller}, \bibinfo{person}{Ali Thabet}, {and} \bibinfo{person}{Bernard
  Ghanem}.} \bibinfo{year}{2019}\natexlab{}.
\newblock \showarticletitle{DeepGCNs: Can GCNs Go as Deep as CNNs?}
\newblock  (\bibinfo{date}{4} \bibinfo{year}{2019}).
\newblock
\urldef\tempurl%
\url{http://arxiv.org/abs/1904.03751}
\showURL{%
\tempurl}


\bibitem[Li et~al\mbox{.}(2018)]%
        {li2018deeper}
\bibfield{author}{\bibinfo{person}{Qimai Li}, \bibinfo{person}{Zhichao Han},
  {and} \bibinfo{person}{Xiao-Ming Wu}.} \bibinfo{year}{2018}\natexlab{}.
\newblock \showarticletitle{Deeper insights into graph convolutional networks
  for semi-supervised learning}. In \bibinfo{booktitle}{\emph{Thirty-Second
  AAAI conference on artificial intelligence}}.
\newblock


\bibitem[Lim et~al\mbox{.}(2021)]%
        {Lim2021large}
\bibfield{author}{\bibinfo{person}{Derek Lim}, \bibinfo{person}{Felix Hohne},
  \bibinfo{person}{Xiuyu Li}, \bibinfo{person}{Sijia~Linda Huang},
  \bibinfo{person}{Vaishnavi Gupta}, \bibinfo{person}{Omkar Bhalerao}, {and}
  \bibinfo{person}{Ser-Nam Lim}.} \bibinfo{year}{2021}\natexlab{}.
\newblock \showarticletitle{Large Scale Learning on Non-Homophilous Graphs: New
  Benchmarks and Strong Simple Methods}.
\newblock \bibinfo{journal}{\emph{Advances in Neural Information Processing
  Systems}}  \bibinfo{volume}{34} (\bibinfo{date}{10} \bibinfo{year}{2021}),
  \bibinfo{pages}{20887--20902}.
\newblock
\urldef\tempurl%
\url{http://arxiv.org/abs/2110.14446}
\showURL{%
\tempurl}


\bibitem[Liu et~al\mbox{.}(2021)]%
        {Liu2021Air}
\bibfield{author}{\bibinfo{person}{Xiaorui Liu}, \bibinfo{person}{Jiayuan
  Ding}, \bibinfo{person}{Wei Jin}, \bibinfo{person}{Han Xu},
  \bibinfo{person}{Yao Ma}, \bibinfo{person}{Zitao Liu}, {and}
  \bibinfo{person}{Jiliang Tang}.} \bibinfo{year}{2021}\natexlab{}.
\newblock \showarticletitle{Graph Neural Networks with Adaptive Residual}.
\newblock \bibinfo{journal}{\emph{NIPS}} (\bibinfo{year}{2021}).
\newblock
\urldef\tempurl%
\url{https://github.com/lxiaorui/AirGNN.}
\showURL{%
\tempurl}


\bibitem[Luan et~al\mbox{.}(2021)]%
        {Luan2021snowball}
\bibfield{author}{\bibinfo{person}{Sitao Luan}, \bibinfo{person}{Chenqing Hua},
  \bibinfo{person}{Qincheng Lu}, \bibinfo{person}{Jiaqi Zhu},
  \bibinfo{person}{Mingde Zhao}, \bibinfo{person}{Shuyuan Zhang},
  \bibinfo{person}{Xiao-Wen Chang}, {and} \bibinfo{person}{Doina Precup}.}
  \bibinfo{year}{2021}\natexlab{}.
\newblock \showarticletitle{Is Heterophily A Real Nightmare For Graph Neural
  Networks To Do Node Classification?}
\newblock  (\bibinfo{date}{9} \bibinfo{year}{2021}).
\newblock
\urldef\tempurl%
\url{http://arxiv.org/abs/2109.05641}
\showURL{%
\tempurl}


\bibitem[McPherson et~al\mbox{.}(2001)]%
        {mcpherson2001birds}
\bibfield{author}{\bibinfo{person}{Miller McPherson}, \bibinfo{person}{Lynn
  Smith-Lovin}, {and} \bibinfo{person}{James~M Cook}.}
  \bibinfo{year}{2001}\natexlab{}.
\newblock \showarticletitle{Birds of a feather: Homophily in social networks}.
\newblock \bibinfo{journal}{\emph{Annual review of sociology}}
  (\bibinfo{year}{2001}), \bibinfo{pages}{415--444}.
\newblock


\bibitem[Narang et~al\mbox{.}(2013)]%
        {narang2013signalarma}
\bibfield{author}{\bibinfo{person}{Sunil~K Narang}, \bibinfo{person}{Akshay
  Gadde}, {and} \bibinfo{person}{Antonio Ortega}.}
  \bibinfo{year}{2013}\natexlab{}.
\newblock \showarticletitle{Signal processing techniques for interpolation in
  graph structured data}. In \bibinfo{booktitle}{\emph{2013 IEEE International
  Conference on Acoustics, Speech and Signal Processing}}. IEEE,
  \bibinfo{pages}{5445--5449}.
\newblock


\bibitem[Paris et~al\mbox{.}(2011)]%
        {paris2011local}
\bibfield{author}{\bibinfo{person}{Sylvain Paris}, \bibinfo{person}{Samuel~W
  Hasinoff}, {and} \bibinfo{person}{Jan Kautz}.}
  \bibinfo{year}{2011}\natexlab{}.
\newblock \showarticletitle{Local laplacian filters: edge-aware image
  processing with a laplacian pyramid.}
\newblock \bibinfo{journal}{\emph{ACM Trans. Graph.}} \bibinfo{volume}{30},
  \bibinfo{number}{4} (\bibinfo{year}{2011}), \bibinfo{pages}{68}.
\newblock


\bibitem[Pei et~al\mbox{.}(2020)]%
        {Pei2020GeomGCN}
\bibfield{author}{\bibinfo{person}{Hongbin Pei}, \bibinfo{person}{Bingzhe Wei},
  \bibinfo{person}{Kevin Chen-Chuan Chang}, \bibinfo{person}{Yu Lei}, {and}
  \bibinfo{person}{Bo Yang}.} \bibinfo{year}{2020}\natexlab{}.
\newblock \showarticletitle{Geom-GCN: Geometric Graph Convolutional Networks}.
  In \bibinfo{booktitle}{\emph{{ICLR}}}.
\newblock


\bibitem[Perozzi and Akoglu(2018)]%
        {perozzi2018socialBoundary}
\bibfield{author}{\bibinfo{person}{Bryan Perozzi} {and} \bibinfo{person}{Leman
  Akoglu}.} \bibinfo{year}{2018}\natexlab{}.
\newblock \showarticletitle{Discovering communities and anomalies in attributed
  graphs: Interactive visual exploration and summarization}.
\newblock \bibinfo{journal}{\emph{ACM Transactions on Knowledge Discovery from
  Data (TKDD)}} \bibinfo{volume}{12}, \bibinfo{number}{2}
  (\bibinfo{year}{2018}), \bibinfo{pages}{1--40}.
\newblock


\bibitem[Rong et~al\mbox{.}(2020)]%
        {rong2020dropedge}
\bibfield{author}{\bibinfo{person}{Yu Rong}, \bibinfo{person}{Wenbing Huang},
  \bibinfo{person}{Tingyang Xu}, {and} \bibinfo{person}{Junzhou Huang}.}
  \bibinfo{year}{2020}\natexlab{}.
\newblock \showarticletitle{DropEdge: Towards Deep Graph Convolutional Networks
  on Node Classification}. In \bibinfo{booktitle}{\emph{{ICLR}}}.
\newblock


\bibitem[Satorras et~al\mbox{.}(2021)]%
        {satorras2021n}
\bibfield{author}{\bibinfo{person}{V{\i}ctor~Garcia Satorras},
  \bibinfo{person}{Emiel Hoogeboom}, {and} \bibinfo{person}{Max Welling}.}
  \bibinfo{year}{2021}\natexlab{}.
\newblock \showarticletitle{E (n) equivariant graph neural networks}. In
  \bibinfo{booktitle}{\emph{International conference on machine learning}}.
  PMLR, \bibinfo{pages}{9323--9332}.
\newblock


\bibitem[Schlichtkrull et~al\mbox{.}(2020)]%
        {schlichtkrull2020interpreting}
\bibfield{author}{\bibinfo{person}{Michael~Sejr Schlichtkrull},
  \bibinfo{person}{Nicola De~Cao}, {and} \bibinfo{person}{Ivan Titov}.}
  \bibinfo{year}{2020}\natexlab{}.
\newblock \showarticletitle{Interpreting graph neural networks for nlp with
  differentiable edge masking}.
\newblock \bibinfo{journal}{\emph{arXiv preprint arXiv:2010.00577}}
  (\bibinfo{year}{2020}).
\newblock


\bibitem[Sen et~al\mbox{.}(2008)]%
        {sen2008collective}
\bibfield{author}{\bibinfo{person}{Prithviraj Sen}, \bibinfo{person}{Galileo
  Namata}, \bibinfo{person}{Mustafa Bilgic}, \bibinfo{person}{Lise Getoor},
  \bibinfo{person}{Brian Galligher}, {and} \bibinfo{person}{Tina Eliassi-Rad}.}
  \bibinfo{year}{2008}\natexlab{}.
\newblock \showarticletitle{Collective classification in network data}.
\newblock \bibinfo{journal}{\emph{AI magazine}} \bibinfo{volume}{29},
  \bibinfo{number}{3} (\bibinfo{year}{2008}), \bibinfo{pages}{93--93}.
\newblock


\bibitem[Shuman et~al\mbox{.}(2013)]%
        {Shuman2013}
\bibfield{author}{\bibinfo{person}{David~I Shuman}, \bibinfo{person}{Benjamin
  Ricaud}, {and} \bibinfo{person}{Pierre Vandergheynst}.}
  \bibinfo{year}{2013}\natexlab{}.
\newblock \showarticletitle{Vertex-Frequency Analysis on Graphs}.
\newblock  (\bibinfo{date}{7} \bibinfo{year}{2013}).
\newblock
\urldef\tempurl%
\url{http://arxiv.org/abs/1307.5708}
\showURL{%
\tempurl}


\bibitem[Sutskever et~al\mbox{.}(2013)]%
        {sutskever2013momentumImportance}
\bibfield{author}{\bibinfo{person}{Ilya Sutskever}, \bibinfo{person}{James
  Martens}, \bibinfo{person}{George Dahl}, {and} \bibinfo{person}{Geoffrey
  Hinton}.} \bibinfo{year}{2013}\natexlab{}.
\newblock \showarticletitle{On the importance of initialization and momentum in
  deep learning}. In \bibinfo{booktitle}{\emph{International conference on
  machine learning}}. PMLR, \bibinfo{pages}{1139--1147}.
\newblock


\bibitem[Tang et~al\mbox{.}(2009)]%
        {tang2009social}
\bibfield{author}{\bibinfo{person}{Jie Tang}, \bibinfo{person}{Jimeng Sun},
  \bibinfo{person}{Chi Wang}, {and} \bibinfo{person}{Zi Yang}.}
  \bibinfo{year}{2009}\natexlab{}.
\newblock \showarticletitle{Social influence analysis in large-scale networks}.
  In \bibinfo{booktitle}{\emph{Proceedings of the 15th ACM SIGKDD international
  conference on Knowledge discovery and data mining}}.
  \bibinfo{pages}{807--816}.
\newblock


\bibitem[Wang et~al\mbox{.}(2019)]%
        {wang2019improving}
\bibfield{author}{\bibinfo{person}{Guangtao Wang}, \bibinfo{person}{Rex Ying},
  \bibinfo{person}{Jing Huang}, {and} \bibinfo{person}{Jure Leskovec}.}
  \bibinfo{year}{2019}\natexlab{}.
\newblock \showarticletitle{Improving graph attention networks with large
  margin-based constraints}.
\newblock \bibinfo{journal}{\emph{arXiv preprint arXiv:1910.11945}}
  (\bibinfo{year}{2019}).
\newblock


\bibitem[Wang and Zhang(2022)]%
        {Wang2022jacovi}
\bibfield{author}{\bibinfo{person}{Xiyuan Wang} {and} \bibinfo{person}{Muhan
  Zhang}.} \bibinfo{year}{2022}\natexlab{}.
\newblock \showarticletitle{How Powerful are Spectral Graph Neural Networks}.
\newblock  (\bibinfo{date}{5} \bibinfo{year}{2022}).
\newblock
\urldef\tempurl%
\url{http://arxiv.org/abs/2205.11172}
\showURL{%
\tempurl}


\bibitem[Wu et~al\mbox{.}(2021)]%
        {wu2021graph}
\bibfield{author}{\bibinfo{person}{Lingfei Wu}, \bibinfo{person}{Yu Chen},
  \bibinfo{person}{Kai Shen}, \bibinfo{person}{Xiaojie Guo},
  \bibinfo{person}{Hanning Gao}, \bibinfo{person}{Shucheng Li},
  \bibinfo{person}{Jian Pei}, {and} \bibinfo{person}{Bo Long}.}
  \bibinfo{year}{2021}\natexlab{}.
\newblock \showarticletitle{Graph neural networks for natural language
  processing: A survey}.
\newblock \bibinfo{journal}{\emph{arXiv preprint arXiv:2106.06090}}
  (\bibinfo{year}{2021}).
\newblock


\bibitem[Wu et~al\mbox{.}(2020)]%
        {wu2020graph}
\bibfield{author}{\bibinfo{person}{Shiwen Wu}, \bibinfo{person}{Fei Sun},
  \bibinfo{person}{Wentao Zhang}, \bibinfo{person}{Xu Xie}, {and}
  \bibinfo{person}{Bin Cui}.} \bibinfo{year}{2020}\natexlab{}.
\newblock \showarticletitle{Graph neural networks in recommender systems: a
  survey}.
\newblock \bibinfo{journal}{\emph{ACM Computing Surveys (CSUR)}}
  (\bibinfo{year}{2020}).
\newblock


\bibitem[Xu et~al\mbox{.}(2018)]%
        {keyulu2018jknet}
\bibfield{author}{\bibinfo{person}{Keyulu Xu}, \bibinfo{person}{Chengtao Li},
  \bibinfo{person}{Yonglong Tian}, \bibinfo{person}{Tomohiro Sonobe},
  \bibinfo{person}{Ken{-}ichi Kawarabayashi}, {and} \bibinfo{person}{Stefanie
  Jegelka}.} \bibinfo{year}{2018}\natexlab{}.
\newblock \showarticletitle{Representation Learning on Graphs with Jumping
  Knowledge Networks}. In \bibinfo{booktitle}{\emph{{ICML}}}.
\newblock


\bibitem[Yang et~al\mbox{.}(2022)]%
        {Yang2022difference}
\bibfield{author}{\bibinfo{person}{Liang Yang}, \bibinfo{person}{Weihang Peng},
  \bibinfo{person}{Wenmiao Zhou}, \bibinfo{person}{Bingxin Niu},
  \bibinfo{person}{Junhua Gu}, \bibinfo{person}{Chuan Wang},
  \bibinfo{person}{Yuanfang Guo}, \bibinfo{person}{Xiaochun Cao}, {and}
  \bibinfo{person}{Dongxiao He}.} \bibinfo{year}{2022}\natexlab{}.
\newblock \showarticletitle{Difference Residual Graph Neural Networks}.
\newblock \bibinfo{journal}{\emph{ACMMM22}} (\bibinfo{year}{2022}).
\newblock
\showISBNx{9781450392037}
\urldef\tempurl%
\url{https://doi.org/10.1145/3503161.3548111}
\showDOI{\tempurl}


\bibitem[Yang et~al\mbox{.}(2016)]%
        {yang2016revisiting}
\bibfield{author}{\bibinfo{person}{Zhilin Yang}, \bibinfo{person}{William
  Cohen}, {and} \bibinfo{person}{Ruslan Salakhudinov}.}
  \bibinfo{year}{2016}\natexlab{}.
\newblock \showarticletitle{Revisiting semi-supervised learning with graph
  embeddings}. In \bibinfo{booktitle}{\emph{International conference on machine
  learning}}. PMLR, \bibinfo{pages}{40--48}.
\newblock


\bibitem[Zhang et~al\mbox{.}(2022)]%
        {Zhang2022hinder}
\bibfield{author}{\bibinfo{person}{Wentao Zhang}, \bibinfo{person}{Zeang
  Sheng}, \bibinfo{person}{Ziqi Yin}, \bibinfo{person}{Yuezihan Jiang},
  \bibinfo{person}{Yikuan Xia}, \bibinfo{person}{Jun Gao}, \bibinfo{person}{Zhi
  Yang}, {and} \bibinfo{person}{Bin Cui}.} \bibinfo{year}{2022}\natexlab{}.
\newblock \showarticletitle{Model Degradation Hinders Deep Graph Neural
  Networks}.
\newblock \bibinfo{journal}{\emph{Proceedings of the 28th ACM SIGKDD Conference
  on Knowledge Discovery and Data Mining}}, \bibinfo{pages}{2493--2503}.
\newblock
\showISBNx{9781450393850}
\urldef\tempurl%
\url{https://doi.org/10.1145/3534678.3539374}
\showDOI{\tempurl}


\bibitem[Zhao et~al\mbox{.}(2021)]%
        {zhao2021adaptive}
\bibfield{author}{\bibinfo{person}{Jialin Zhao}, \bibinfo{person}{Yuxiao Dong},
  \bibinfo{person}{Ming Ding}, \bibinfo{person}{Evgeny Kharlamov}, {and}
  \bibinfo{person}{Jie Tang}.} \bibinfo{year}{2021}\natexlab{}.
\newblock \showarticletitle{Adaptive Diffusion in Graph Neural Networks}.
\newblock \bibinfo{journal}{\emph{Advances in Neural Information Processing
  Systems}}  \bibinfo{volume}{34} (\bibinfo{year}{2021}),
  \bibinfo{pages}{23321--23333}.
\newblock


\bibitem[Zheng et~al\mbox{.}(2022)]%
        {zheng2022hetesurvey}
\bibfield{author}{\bibinfo{person}{Xin Zheng}, \bibinfo{person}{Yixin Liu},
  \bibinfo{person}{Shirui Pan}, \bibinfo{person}{Miao Zhang},
  \bibinfo{person}{Di Jin}, {and} \bibinfo{person}{Philip~S Yu}.}
  \bibinfo{year}{2022}\natexlab{}.
\newblock \showarticletitle{Graph neural networks for graphs with heterophily:
  A survey}.
\newblock \bibinfo{journal}{\emph{arXiv preprint arXiv:2202.07082}}
  (\bibinfo{year}{2022}).
\newblock


\bibitem[Zhu et~al\mbox{.}(2020)]%
        {zhu2020beyond}
\bibfield{author}{\bibinfo{person}{Jiong Zhu}, \bibinfo{person}{Yujun Yan},
  \bibinfo{person}{Lingxiao Zhao}, \bibinfo{person}{Mark Heimann},
  \bibinfo{person}{Leman Akoglu}, {and} \bibinfo{person}{Danai Koutra}.}
  \bibinfo{year}{2020}\natexlab{}.
\newblock \showarticletitle{Beyond homophily in graph neural networks: Current
  limitations and effective designs}.
\newblock \bibinfo{journal}{\emph{Advances in Neural Information Processing
  Systems}}  \bibinfo{volume}{33} (\bibinfo{year}{2020}),
  \bibinfo{pages}{7793--7804}.
\newblock


\end{thebibliography}

\end{document}